\documentclass[lettersize,journal]{IEEEtran}
\IEEEoverridecommandlockouts

\usepackage{amsmath,amsfonts}
\usepackage{amsthm}
\usepackage{array}
\usepackage[caption=false,font=normalsize,labelfont=sf,textfont=sf]{subfig}
\usepackage{textcomp}
\usepackage{stfloats}
\usepackage{url}
\usepackage{verbatim}
\usepackage{graphicx}
\usepackage{cite}
\usepackage{xcolor}
\usepackage{hyperref}
\usepackage{url}
\usepackage{here}
\usepackage{algorithm}
\usepackage{amssymb} 

\newtheorem{proposition}{Proposition}
\newtheorem{corollary}{Corollary}

\newtheorem{lemma}{Lemma}
\newtheorem*{remark}{Remark}
\usepackage[noend]{algpseudocode}
\algrenewcommand\algorithmicrequire{\textbf{Initiate}}
\usepackage{tabularx}
\usepackage[bottom]{footmisc}
\usepackage{comment}
\usepackage{nomencl}
\newcommand{\etal}{\emph{et~al.\vspace{-0.25px}}\@}
\usepackage{adjustbox}
\usepackage{subcaption}
\usepackage{overpic}
\usepackage{tabularx}
\usepackage{array}
\newcolumntype{Y}{>{\raggedright\arraybackslash}X}
\usepackage{graphicx}
\usepackage{overpic}
\usepackage{xcolor}
\usepackage{subcaption}
\usepackage{enumitem}
\definecolor{darkgreen}{RGB}{0,100,0}
\begin{document}
\title{Haptic Rendering of Fractional-Order Viscoelasticity: Passivity and Rendering Fidelity}

\author{Gorkem Gemalmaz,~\IEEEmembership{Student Member,~IEEE}
\hspace{25mm} Harun Tolasa,~\IEEEmembership{Student Member,~IEEE} \\ \vspace{3mm}
Volkan Patoglu,~\IEEEmembership{Member,~IEEE}
\thanks{G. Gemalmaz, H. Tolasa, and V. Patoglu are with the Faculty of Engineering and Natural Sciences at Sabanci University,  Istanbul, Turkiye.\\ {\tt\scriptsize \{gorkem.gemalmaz,harun.tolasa,volkan.patoglu\}@sabanciuniv.edu}}}


\maketitle

\begin{abstract}
Haptic rendering of viscoelastic materials that exhibit creep and stress relaxation is crucial for many applications, such as medical training with realistic biological tissue models.
Fractional-order viscoelastic models provide an effective means of describing intrinsically time-dependent dynamics with few parameters, as these models can naturally capture memory effects.
In this study, we present analyses of passivity and rendering performance for fractional-order viscoelastic models under finite-memory discretization. We derive closed-form expressions to ensure the passivity of haptic rendering with a fractional-order~(FO) standard linear solid~(SLS) model based on Grünwald-Letnikov derivative under short-memory discretization. We also provide symbolic expressions for the effective stiffness and damping of such FO-SLS models. The resulting passivity conditions constitute a unified framework that generalizes previously reported results for integer-order Kelvin–Voigt, Maxwell, and SLS models, since these results are special cases of the newly derived condition. Furthermore, we provide experimental validations of the theoretical passivity bounds and human-subject evaluations of perceived realism of FO-SLS models. Overall, this study establishes a unified theoretical framework and experimental evaluations for FO viscoelastic rendering under short-memory discretization.
\end{abstract}

\begin{IEEEkeywords}
Viscoelastic materials, fractional-order standard linear solid model, Grünwald-Letnikov derivative, short memory discretization, haptic rendering, passivity, and perceived realism.
\end{IEEEkeywords}\vspace{-3mm}

\section{Introduction}

\IEEEPARstart{H}{aptic} rendering allows users to physically interact with virtual objects by providing kinesthetic and tactile feedback through a haptic interface. Applications of this technology are critical to a wide variety of areas, including medical and surgical training simulators~\cite{salisbury1995haptic}. In medical training, haptic simulators allow trainees to practice complex procedures in realistic, risk-free environments, while in robot-assisted surgery, force feedback can compensate for the loss of direct tactile sensation inherent to teleoperated systems. Beyond training, the mechanical response of biological tissues provides valuable diagnostic cues, since abnormalities such as tumors often manifest through changes in viscoelastic behavior~\cite{breast_cancer,breast_tissue,prostate}.

Accurate haptic rendering of viscoelastic materials requires models that can capture the intrinsically time-dependent mechanical behavior of biological tissues. Unlike purely elastic materials, biological tissues exhibit viscoelastic phenomena such as creep and stress relaxation, which arise from their internal memory effects~\cite{time_dependant_tissue}. Traditional integer-order~(IO) spring--damper networks often fail to reproduce these behaviors faithfully, as they lack the ability to represent long-term memory and frequency-dependent dynamics~\cite{IO_fails_memory}. As a result, fractional-order~(FO) viscoelastic models have gained increasing attention as more accurate and compact representations. By generalizing classical constitutive laws through non-integer derivatives, FO models can naturally capture memory-dependent dynamics using relatively few parameters~\cite{FO_few_param,BagleyTorvik1983,Mainardi2010}.

Despite these advantages, the adoption of fractional-order models in real-time haptic rendering remains limited. A primary challenge lies in interpreting and tuning FO model parameters. While fractional-order models admit physical interpretations, the coupling between the fractional order and the remaining parameters is frequency-dependent, making the influence of each parameter on perceived material behavior difficult to predict. Consequently, identifying parameter sets that yield high perceived realism during haptic interaction is nontrivial, particularly when aiming for solutions that generalize across users with differing perception and interaction styles. This challenge has motivated recent work on human-in-the-loop~(HiL) optimization and population-level perceptual modeling, enabling the systematic identification of realistic parameter sets for FO viscoelastic models~\cite {Tolasa2026}.

Beyond parameter selection, another fundamental challenge arises due to the discrete-time implementation of fractional-order dynamics. Ideal fractional derivatives possess infinite memory, whereas real-time haptic rendering necessarily relies on discrete-time implementations with \emph{finite memory}. While continuous-time operators, such as $s^{\alpha}$, are relatively straightforward to analyze, their discrete-time realizations, typically based on the Grünwald-Letnikov definition, require truncation of the memory kernel. The resulting short-memory discretization introduces a large discrete parameter space and significantly complicates both analytical characterization and implementation of these models. Moreover, truncation length cannot be chosen arbitrarily small, as viscoelastic effects only emerge when a sufficiently long interaction history is retained.

Ensuring coupled stability during user interactions further compounds these difficulties. In haptic systems, coupled stability is commonly guaranteed through passivity, which provides an implementation-independent criterion to prevent undesired energy generation. While passivity conditions are well established for classical IO spring-damper systems~\cite{colgate1994factors,colgate}, and some results exist for continuous-time FO models~\cite{tokatli2015stability,tokatli_tez,tokatli_patoglu_2018, Aydin2017, aydin2018stable, Yusuf2020}, analytical passivity bounds for discrete-time FO viscoelastic models under finite-memory discretization are absent from the literature. In particular, no closed-form passivity conditions are available for FO standard linear solid~(FO-SLS) model implemented with realistic short-memory approximations.

To address these gaps, this study presents a unified theoretical and experimental framework for FO viscoelastic rendering under finite memory discretization. On the analytical side, we derive closed-form passivity conditions that guarantee coupled stability of FO-SLS virtual environments implemented under short-memory discretization. We further characterize the rendering behavior of these models by deriving symbolic expressions for their effective stiffness and effective damping. These expressions not only facilitate performance analyses but also reveal how classical IO Kelvin-Voigt, Maxwell, and SLS models emerge as special cases of the proposed framework. On the perceptual side, we experimentally compare the perceived realism of three representative FO-SLS virtual environments with varying memory truncation lengths against a physical viscoelastic material, to study the effect of memory length on perceived realism in viscoelastic rendering. Overall, this study provides a comprehensive assessment of both coupled stability and performance of FO viscoelastic rendering, and establishes a principled pathway for the reliable and practical use of FO models in interaction control.


\medskip
\noindent \emph{Contributions:} 

 \begin{itemize}[leftmargin=5mm]
 \item[(i)] 
    We present an analytical passivity condition for the FO-SLS model under short-memory discretization and show that this condition encompasses and generalizes previously proposed passivity formulations of viscoelastic virtual environments. We show that commonly used IO and FO viscoelastic models, such as the Kelvin-Voigt and Maxwell models, can be directly derived as particular cases of the proposed condition, by setting specific parameters to zero or infinity. Moreover, by setting the differentiation order to integer values, the formulation naturally reduces to IO passivity conditions, unifying the passivity results for FO and classical IO viscoelastic representations. Furthermore, since the derivation is performed under the short-memory discretization of Grünwald-Letnikov derivative, the results can be practically applied for real-time rendering with any memory window length. \smallskip
 \item[(ii)] 
    We systematically compare the perceived realism of various virtual FO environments across different memory truncation lengths in a human-subject experiment that evaluates haptic rendering realism based on participants' preferences. We provide experimental evidence of the existence of an upper bound on the memory length, after which the differences are not perceivable by the participants, justifying the use of truncated models. 
\end{itemize}

\begin{figure*}[t!]
  \includegraphics[width=.95\textwidth]{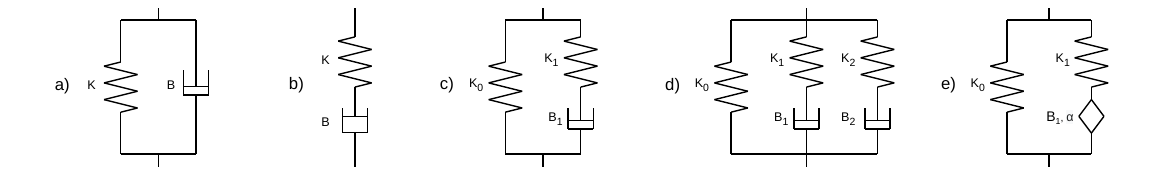}  
  \vspace{-1.0\baselineskip}
  \caption{Common viscoelastic models:
(a) Kelvin–Voigt model,
(b) Maxwell model,
(c) Standard linear solid model (Maxwell representation),
(d) Generalized Maxwell model with 5 parameters,
(e) Fractional-order standard linear solid model.}\label{fig:visco_models}\vspace{-3mm}
\end{figure*}

\vspace{-3mm}
\section{Related Work} \vspace{-1mm}

\subsection{Viscoelastic Models}
Precise capture of complex time-dependent behaviors of viscoelastic materials, such as creep and stress relaxation, is necessary for providing realistic haptic feedback. Where creep represents the material's deformation with time under a constant level of stress, stress relaxation is the decay in stress with time at a given constant level of strain. Several viscoelastic models, as depicted in Fig.~\ref{fig:visco_models}, have been proposed to model these phenomena, including the Maxwell and Kelvin-Voigt models~\cite{fractional_model_kelvin_max}. Unfortunately, most models cannot capture both behaviors with great accuracy. 

Classical viscoelastic models, including the Kelvin–Voigt, Maxwell, standard linear solid~(SLS), and generalized Maxwell models, have traditionally been utilized to approximate the time-dependent behavior of biological and synthetic materials. The Kelvin–Voigt model satisfactorily describes creep as it asymptotically approaches the strain value under constant stress; however, it is less useful in stress-relaxation situations due to its poor dynamic range~\cite{kelvin_orj,voigt_orj,kelvin}. Conversely, the Maxwell model effectively captures stress relaxation via an exponential decay, that is consistent with experimentally observed behaviors. On the other hand, it predicts unbounded linear creep under constant stress and thus is often inappropriate for many biological tissues~\cite{maxwell_orj,kelvin-maxwell}. The SLS (also called Zener) model represents a significant improvement over these limitations, as the combination of spring–dashpot elements enables this model to more appropriately represent both creep and stress relaxation behaviors; however, its single relaxation time limits its fidelity when modeling materials with broader relaxation spectra~\cite{zener_orj,frac_sls}. 

Arguably the most expressive family of models, the generalized Maxwell formulation overcomes these issues by including multiple Maxwell branches, each representing a different relaxation time, thereby allowing closer approximation of complex viscoelastic responses~\cite{wiechert_orj,frac-intg_model}. However, finite implementations of such models remain incomplete because they fail to fully capture the history dependence underlying viscoelasticity. Notably, in the limit, as the number of Maxwell branches approaches infinity, the generalized Maxwell model converges to a FO-SLS representation with a fractional damper that naturally captures long-range memory effects and more accurately represents viscoelastic materials~\cite{kelvin,gmm-sls}.

The FO-SLS model extends the classical IO-SLS model by the introduction of fractional derivatives and gives a more realistic representation of viscoelastic materials with complex behavior~\cite{frac_sls}. Fractional derivatives of FO dampers include history data from previous states; accordingly, FO calculus has been successfully utilized for modeling viscoelastic materials~\cite{cagatay_liver,tokatli2015stability,tokatli_patoglu_2018}. FO models have been shown to be effective in capturing intermediate viscoelastic responses between solid-like and fluid-like behaviors, for materials showing creep and relaxation characteristics~\cite{frac_better}. FO calculus has been identified as a better tool for describing such materials, since FO models require fewer parameters and simpler mathematical tools~\cite{fractals}. 

\vspace{-3mm}
\subsection{Passivity of Haptic Rendering}


Passivity theory gained prominence in haptic rendering, due to the seminal works of Colgate and Schenkel, who applied the passivity theorems to sampled-data systems to formulate passivity criteria for haptic rendering with discrete-time controllers~\cite{colgate}. The concept of passivity plays an essential role in haptic rendering, since passivity guarantees coupled stability without modeling the human operator. Furthermore, passivity conditions are analytical; hence, they can provide direct insights into the plant parameters and controller gains that affect the coupled stability of the interactions.

Weir simplified the proof in~\cite{colgate} and further extended the passivity condition, such that it is applicable to more general plants that possess damping that is non-decreasing with respect to frequency~\cite{Weir_thesis}. Haddadi and Hashtrudi-Zaad further extended the result to the velocity-sampled case~\cite{Haddadi2010}. Diolaiti~\etal included the effects of quantization, time delay, and Coulomb friction, through an energy-based analysis~\cite{Diolaiti2006}.  Finally, Fardad and Bamieh~\cite{Fardad2009} used frequency-response operators to describe the sampled-data system and established passivity results that encompass previous bounds as special cases within a more general framework.

The introduction of \emph{virtual coupling} between a haptic interface and a multibody simulation is a well-established means of ensuring coupled stability of interactions in a modular manner~\cite{Colgate1995,Brown1997,BrownPhD}. In particular, if the passivity of the sampled-data system comprising the haptic interface and the virtual coupler is guaranteed, then it is sufficient to ensure discrete-time passivity of the multibody simulation capturing the dynamics of the virtual environment to guarantee coupled stability. Given that virtual couplers typically take the form of viscoelastic models for impedance-type haptic interfaces, the passivity bounds established for rendering viscoelastic virtual environments also yield closely related results that may guide the design of virtual couplers~\cite{Colgate1995,Brown1997,BrownPhD, Adams1998, Adams1999}.   

To render realistic viscoelastic virtual environments, Son and Bhattacharjee applied passivity criteria to the SLS model for tissue rendering and derived analytical passivity bounds for the IO-SLS model~\cite{son2016passivity}. Tokatli and Patoglu studied the passivity of haptic rendering with FO system models and established passivity conditions for FO plants and/or virtual environments~\cite{tokatli2015stability,tokatli_tez,tokatli_patoglu_2018}. In this study, we further extend the basic results for FO models to establish passivity conditions for the FO-SLS model under the short-term memory discretization. Accordingly, we not only generalize the underlying model of viscoelasticity to one with higher representation power, but also rigorously address the effects of the finite memory requirements of haptic rendering with FO models. The results of this study provide a generalized framework for the previously studied viscoelasticity models.



\section{Preliminaries} \label{sec:Preliminaries}

\subsection{Fractional-Order Model of Viscoelasticity}

The FO-SLS model, illustrated in Fig.~\ref{fig:visco_models}.e, extends the classical SLS representation by incorporating a fractional order derivative  $D^\alpha$, where $\alpha\in (0,1)$, for the damping term. This allows for a more flexible modeling of viscoelastic behavior compared with classic integer-order models.

Several mathematically equivalent definitions of fractional derivatives exist. In this work, we adopt the Grünwald--Letnikov derivative~\cite{grunwald}, owing to its natural compatibility with discrete-time implementations. The Grünwald--Letnikov derivative of a function $f(x)$ is given by
\begin{equation}
D^\alpha f(x) = 
\lim_{h \to 0}
\frac{1}{h^\alpha}
\sum_{k=0}^{\infty} (-1)^k 
\binom{\alpha}{k}\,
f(x-kh),
\nonumber
\end{equation}
where $\alpha$ denotes the order of differentiation, $h$ is the step size, and the generalized binomial coefficient is defined as
\begin{equation}
\binom{\alpha}{k}
=
\frac{\Gamma(\alpha+1)}
{\Gamma(k+1)\Gamma(\alpha-k+1)},
\nonumber
\end{equation}
with $\Gamma(\cdot)$ representing the Gamma function.

Integer-order derivatives are recovered as special cases of fractional derivatives (e.g., $D^1$ is the first derivative and $D^0$ is the identity). In contrast to their integer-order counterparts, fractional derivatives are inherently nonlocal, indicating that the derivative at a given time depends on the entire history of the function. This property naturally captures memory-dependent effects such as creep and stress relaxation in viscoelastic media.

Fractional-order derivatives are linear operators and possess well-defined Laplace transforms. For the Grünwald--Letnikov fractional derivative of order~$\alpha$, the Laplace transform is
\begin{equation}
\mathcal{L}\{D^\alpha f(t)\}
=
s^\alpha F(s)
-
\sum_{k=0}^{n-1}
s^{\alpha-1-k}
f^{(k)}(0^+),
\nonumber
\end{equation}
where $n=\lceil \alpha \rceil$ denotes the smallest integer not less than~$\alpha$. Under the standard assumption used in transfer-function analysis that all initial conditions vanish, this expression reduces~to
\begin{equation}
\mathcal{L}\{D^\alpha f(t)\}
=
s^\alpha F(s),
\nonumber
\end{equation}
which serves as a direct generalization of the IO Laplace operator.

The force-position impedance transfer function of the FO-SLS model in Fig.~\ref{fig:visco_models}.e follows as
\begin{equation}
H(s)
=
K_0
+
\frac{
K_1 B_1 s^\alpha
}{
K_1
+
B_1 s^\alpha
},
\end{equation}
where $K_0$ and $K_1$ denote elastic stiffness coefficients, $B_1$ represents the coefficient of the fractional-order element, and the order $\alpha$ determines the ratio of elastic to viscous behavior for $\alpha \in (0,1)$.

\subsection{Short-Memory Discretization}

A practical challenge in implementing FO dynamics is that their exact computation requires infinite memory. Because this is infeasible, the convolution sum must be truncated to a finite memory depth. The short-memory approximation~\cite{short_memory} addresses this issue by exploiting the fact that the binomial weights in the Grünwald--Letnikov definition are largest for recent samples and decay rapidly for past data. Thus, a finite window of the most recent samples provides a sufficiently accurate approximation.

Under the short-memory approach, the $z$-transform of the fractional derivative of order $\alpha$ for a discrete-time signal $x(t)$ is approximated as
\begin{equation}
\mathcal{Z}\{D^\alpha x(t)\}
\approx
\frac{1}{T^\alpha}
\sum_{i=0}^{N}
c_i\,z^{-i},
\nonumber
\end{equation}
where $T$ is the sampling period, $N$ is the memory window length, and the coefficients $c_i$ are computed recursively as
\begin{align*}
c_{0} &= 1,\\
c_{i}
&=
(-1)^i \binom{\alpha}{i}
=
\frac{i-\alpha-1}{i}\,c_{i-1},
\qquad
i=1,2,\ldots,N.
\end{align*}

When $\alpha = 1$, the above recursion reduces to the standard backward-difference discretization, ensuring consistency with classical integer-order modeling.

Using this approximation, the discrete-time force--position impedance of the fractional-order SLS virtual environment becomes
\begin{equation}
\label{eq:SLS}
H(z)
=
K_0
+
\frac{
K_1 B_1 \frac{1}{T^{\alpha}} \sum_{i=0}^{N} c_i z^{-i}
}{
K_1
+
B_1 \frac{1}{T^{\alpha}} \sum_{i=0}^{N} c_i z^{-i}
},
\end{equation}

\noindent where the window length $N$ determines how accurately the discrete model captures viscoelastic phenomena such as stress relaxation and creep.

\section{Passivity of Fractional-Order SLS Model under Short-Memory Discretization}

The passivity of a discrete-time viscoelastic virtual environment can be assessed using the analytical condition introduced by Colgate~\etal~\cite{colgate} for the model in Fig.~\ref{fig:Sampled_data}. In particular, consider an haptic interface with mass $m$ and damping $b$, whose position $x$ is sampled at the rate $T$. The discrete-time virtual environment is represented by $H(z)$, while $ZOH$ denotes the zero-order hold. 
For this sampled-data system, a necessary and sufficient condition for passivity is given by
\begin{equation*}
\resizebox{\linewidth}{!}{$
\,\, b > 
\frac{T}{2\, (1 - \cos(\omega T))}
\, \Re\,\left\{
(1 - e^{-i \omega T})\, H(e^{i \omega T})
\right\}
\qquad
0 \le \omega \le \omega_N 
$}
\end{equation*}
where $\omega_N = \pi/T$ is the Nyquist frequency. 

\begin{figure}[h!]
    \centering
    \includegraphics[width=.85\linewidth]{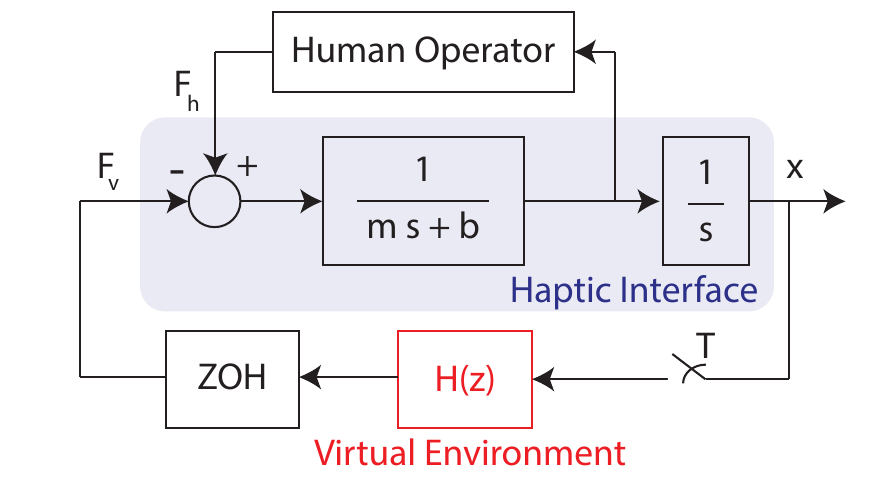}
    \vspace{-2mm}
    \caption{Model of the sampled-data system for haptic rendering.}
    \label{fig:Sampled_data} \vspace{-2mm}
\end{figure}

Substituting the discrete-time impedance of the FO-SLS model given in~ Eqn.~\eqref{eq:SLS} into $H(e^{i \omega T})$ yields a frequency-dependent passivity function for the right-hand side of the passivity condition. Define $f(\omega) = \frac{T}{2\big(1-\cos(\omega T)\big)} \Re\!\left\{\big(1-e^{-i \omega T}\big)\,H(e^{i \omega T})\right\}$. 

To derive a passivity condition in terms of the model parameters $K_0$, $K_1$, $B_1$, $\alpha$, $N$, and $T$, it is necessary to determine the maximum value of $f(\omega)$ over the interval $(0, \pi/T]$.

The following lemma establishes the frequencies where the passivity function is maximized for a finite memory length~$N$ and enables the subsequent derivation of the passivity bounds.

\begin{lemma}
\label{lem:passivity_n_finite}
Consider the discrete-time FO-SLS model, whose fractional derivative $s^{\alpha}$ is discretized using a short-memory expansion of length~$N$. Let $H(e^{i \omega T})$ denote the resulting discrete-time transfer function of the FO-SLS model, as given in~ Eqn.~\eqref{eq:SLS}.  For model parameters satisfying $K_1>0$, $B_1>0$, $0< \alpha <1$, and $T>0$, the passivity function $f(\omega)$ achieves its global maximum at the Nyquist frequency, when the short-term memory length~$N$ is \emph{odd}, that is,

\begin{equation*}
\label{eq:fomega_nyquist_max}
\max_{\omega \in (0,\pi/T]} f(\omega)
=
f\!\left(\frac{\pi}{T}\right),
\;\; \text{for odd $N$}.
\end{equation*}
\end{lemma}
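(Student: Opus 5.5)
Set $\theta=\omega T\in(0,\pi]$. Write $D(\theta)=T^{-\alpha}\sum_{i=0}^{N}c_i e^{-i\theta i}$ and $W(\theta)=K_1+B_1D(\theta)$. The plan is to strip $f$ down to a single non-constant term and then compare that term with its value at $\theta=\pi$.

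\emph{Step 1: reduce $f$ to one term.} Use the algebraic identity
\[
\frac{K_1B_1D}{K_1+B_1D}=K_1-\frac{K_1^2}{W},
\]
so that $H=K_0+K_1-K_1^2/W$. The constant part contributes $\Re\{(1-e^{-i\theta})(K_0+K_1)\}=(K_0+K_1)(1-\cos\theta)$, which after the prefactor is the constant $(K_0+K_1)T/2$. Hence
\[
f=\frac{(K_0+K_1)T}{2}-\frac{K_1^2T}{2}\,g(\theta),\qquad
g(\theta)=\frac{\Re\{(1-e^{-i\theta})\overline{W(\theta)}\}}{(1-\cos\theta)\,|W(\theta)|^2}.
\]
The lemma is therefore equivalent to showing that $g(\theta)\ge g(\pi)$ on $(0,\pi]$.

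\emph{Step 2: evaluate at Nyquist.} At $\theta=\pi$ we have $(-1)^ic_i=\binom{\alpha}{i}$, so $D(\pi)=T^{-\alpha}S_N$ with $S_N=\sum_{i=0}^{N}\binom{\alpha}{i}$. This is real, and $g(\pi)=2/\big(2(K_1+B_1T^{-\alpha}S_N)\big)=1/(K_1+B_1T^{-\alpha}S_N)$.

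For $0<\alpha<1$ the terms $\binom{\alpha}{i}$, $i\ge1$, alternate in sign (positive for odd $i$) and decrease in magnitude. Their full sum is $2^\alpha$. Consequently, for odd $N$ the partial sum $S_N$ exceeds every other value of the real part of the relevant expression; in particular $S_N\ge S_{N+1}$ and $S_N>2^\alpha$. This is where oddness of $N$ enters: for even $N$, $S_N$ would undershoot.

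\emph{Step 3: the key inequality.} It remains to prove the trigonometric-polynomial inequality
\[
(K_1+B_1T^{-\alpha}S_N)\,\Re\{(1-e^{-i\theta})\overline{W}\}\;\ge\;(1-\cos\theta)\,|W|^2 .
\]
Write $W=a+ib$ with $a=K_1+B_1T^{-\alpha}\sum c_i\cos(i\theta)$ and $b=-B_1T^{-\alpha}\sum c_i\sin(i\theta)$. Then
\[
\Re\{(1-e^{-i\theta})\overline W\}=a(1-\cos\theta)-b\sin\theta .
\]
Dividing by $1-\cos\theta$ and using $\sin\theta/(1-\cos\theta)=\cot(\theta/2)$, the inequality becomes
\[
(K_1+B_1T^{-\alpha}S_N)\big(a-b\cot(\theta/2)\big)\ge a^2+b^2 .
\]
I would attack this by exploiting the sign structure $c_0=1$ and $c_i<0$ for $i\ge1$, with $|c_i|$ decreasing. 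This structure should give two facts:
\begin{itemize}
\item[(a)] $-b\cot(\theta/2)\ge0$, via a Fejér/Lukács-type positivity of $\sum_{i\ge1}|c_i|\sin(i\theta)$;
\item[(b)] $a\le K_1+B_1T^{-\alpha}S_N$, since $\sum c_i\cos(i\theta)=1-\sum_{i\ge1}|c_i|\cos(i\theta)$ is maximized at $\theta=\pi$ for odd $N$ (alternating-series bound on the truncated cosine sum).
\end{itemize}
Given (a) and (b), the target reduces to controlling $b^2$ by the cross term $(K_1+B_1T^{-\alpha}S_N)(-b\cot(\theta/2))$ together with the slack $(K_1+B_1T^{-\alpha}S_N-a)\,a$.

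\emph{Main obstacle.} The hardest step is this last comparison, bounding $b^2$ for all $\theta\in(0,\pi)$ uniformly in $N$ and $\alpha$. I would first try to factor $1-e^{-i\theta}$ out of $D$, using that the coefficients of $(1-z)^{-1}\sum_{i=0}^N c_i z^i$ are the partial sums $\sum_{j\le i}c_j>0$, which decrease in $i$. This rewrites $D=(1-e^{-i\theta})P(\theta)$ with $P$ having positive decreasing coefficients, so that Kakeya/Fejér positivity of $\Re P$ and of $\Re\{e^{i\theta/2}P\}$ can be applied. The odd-$N$ hypothesis must then be used precisely at the endpoint comparison with $\theta=\pi$.
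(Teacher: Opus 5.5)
Steps 1 and 2 are correct, but Step 3 contains a sign slip, and, more importantly, the proposal stops exactly where the content of the lemma begins.

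\textbf{The sign slip.} The real part is $\Re\{(1-e^{-i\theta})\overline{W}\}=a(1-\cos\theta)+b\sin\theta$, not $a(1-\cos\theta)-b\sin\theta$. So the target is $W(\pi)\big(a+b\cot\tfrac{\theta}{2}\big)\ge a^2+b^2$. Your $b=\Im W=B_1T^{-\alpha}\sum_{k\ge1}|c_k|\sin(k\theta)$ is nonnegative by the very Fej\'er-type positivity you invoke; already for $N=1$, $b=\alpha B_1T^{-\alpha}\sin\theta>0$. Hence claim (a) as written is false. The inequality you wrote down also fails as $\theta\to0$ for large $N$: there $b\cot\tfrac{\theta}{2}\to 2B_1T^{-\alpha}\sum_{k=1}^N k|c_k|$, which grows like $N^{1-\alpha}$, while $a$ stays bounded. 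With the sign corrected, the cross term is $+b\cot\tfrac{\theta}{2}\ge 0$, which is what you wanted.

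\textbf{The main gap.} Put $\beta=B_1T^{-\alpha}$ and $a'+ib'=P:=\sum_{k=0}^N c_k e^{-ik\theta}$. Then $a=K_1+\beta a'$, $b=\beta b'$, $W(\pi)=K_1+\beta S_N$, and the corrected target expands exactly as
\[
K_1\beta\Big[S_N-a'+b'\cot\tfrac{\theta}{2}\Big]+\beta^2\Big[a'(S_N-a')+b'\big(S_N\cot\tfrac{\theta}{2}-b'\big)\Big]\;\ge\;0 .
\]
The sign-corrected (a) together with (b) only disposes of the first bracket, which is the Kelvin--Voigt limit $K_1\to\infty$. The second bracket is independent of $K_1$. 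It is the $K_1\to0^+$ limit of the lemma, so it is necessary, and it cannot follow from (a) and (b), which give no upper bound on $b'$.

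The same point can be seen geometrically. Since $g=2\Re\{1/((1-e^{i\theta})W)\}$, the lemma is the disk condition $|(1-e^{i\theta})W-W(\pi)|\le W(\pi)$. Because $K_1(1-e^{i\theta})$ lies on the circle $|w-K_1|=K_1$, everything reduces to the single bound $|(1-e^{i\theta})P-S_N|\le S_N$. This bound is free of $K_1$, $B_1$, $T$, and it is tight both as $\theta\to0$ and at $\theta=\pi$.

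What is needed here is a genuinely quadratic estimate, for example $b'\tan\tfrac{\theta}{2}\le S_N$, or some joint control of $a'^2+b'^2$. The proposal supplies none; it only labels this the ``main obstacle''.

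\textbf{Further problems with the tools you propose.}
\begin{itemize}
\item The suggested factorization does not work as stated. Since $\sum_{k=0}^N c_k=\binom{N-\alpha}{N}\neq0$, the factor $1-z$ does not divide $\sum_k c_k z^k$. One only has $\sum_{k=0}^N c_k z^k=(1-z)\sum_{k<N}d_k z^k+d_N z^N$ with $d_k=\sum_{j\le k}c_j$. The remainder $d_N z^N$, which vanishes only as $N\to\infty$, is exactly the finite-memory correction you would have to control.
\item Facts (a) and (b) are themselves only asserted.
\item This matters most for (b), which is where the oddness of $N$ is actually used. (b) compares the truncated cosine sum across all $\theta$. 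An alternating-series bound only evaluates that sum at $\theta=\pi$, so it does not establish (b).
\end{itemize}
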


\noindent A proof sketch is provided in the Supplementary Document~(SD)~\cite{Supplementary}. 

\begin{remark}[1]
If the short-memory length $N$ is \emph{even}, the maximum of the passivity function~$f(\omega)$ does not necessarily occur at the Nyquist frequency. Instead, there exists a frequency $\omega^\star < \pi/T$
such that
\begin{equation*}
\max_{\omega \in (0,\pi/T]} f(\omega)
=
f(\omega^\star), 
\;\; \text{for even $N$}.
\end{equation*}
\end{remark}

\begin{remark}[2]
\label{rem:passivity_infinite}
In the limit as the short-memory length approaches infinity, that is, $N \to \infty$, the passivity function $f(\omega)$ of the discrete-time FO-SLS model becomes strictly monotonically increasing over the interval $(0,\pi/T]$. Consequently, 
\begin{equation*}
\max_{\omega \in (0,\pi/T]} f(\omega) = f(\pi/T) \;\; \text{for $N \to \infty$}.
\end{equation*}
\end{remark}

To demonstrate these results, consider the numerical example in Fig.~\ref{fig:passivity_inset}, where the passivity function $f(\omega)$ vs  $\omega \, T$  is plotted for $N=100$, $N=101$, and $N \to \infty$, respectively. As expected, the asymptotic case exhibits strictly monotonic growth. In contrast, the truncated sums fluctuate around this asymptotic limit. For odd $N$ values, the maximum value remains at $\omega = \pi/T$ and slightly exceeds the asymptotic case, whereas for even $N$ values, the maximum shifts to a point $\omega^\star < \pi/T$, with the value at $\pi/T$ being lower than in the asymptotic case. Although this example is illustrative, the same trend holds across all cases, as discussed in the SD~\cite{Supplementary}.

\begin{figure}[h]
\centering

\setkeys{Gin}{trim=135 240 125 240,clip}

\begin{overpic}[width=.85\linewidth]{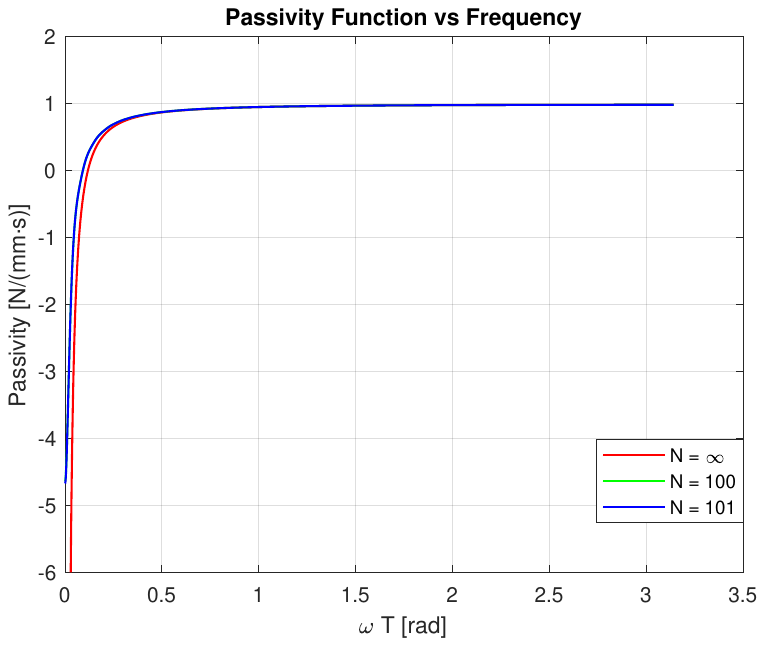}

\put(20,10.7){%
  \fcolorbox{red}{white}{%
   \setlength{\fboxsep}{6pt}   
    \setlength{\fboxrule}{1pt}
\includegraphics[
      width=0.625\linewidth,
      trim=110 253 125 260, 
      clip
    ]{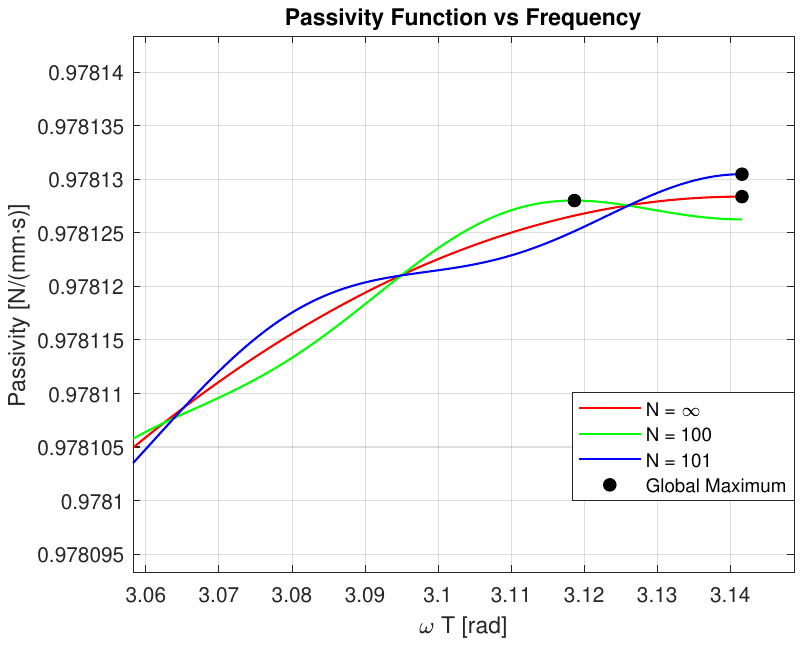}%
  }%
}

\color{red}
\linethickness{0.8pt}
\put(86,73){\oval(5,4)} 

\put(86,71){\vector(-4,-1){19}} 
\color{black}
\end{overpic}

\setkeys{Gin}{}
\vspace{-1.5mm}
\caption{Passivity function $f(\omega)$ versus $\omega\, T$ for $N = 100$, $N = 101$, and $N \to \infty$, for the parameters $K_0 = 0\,\mathrm{N/mm}$, $K_1 = 1\,\mathrm{N/mm}$, $B_1 = 1\,\mathrm{N \cdot s^{\alpha} / mm}$, $\alpha = 0.5$, and $T = 0.001\,\mathrm{s}$. The main plot shows the full frequency range, while the inset highlights a zoomed-in region around $\omega T = \pi$. } 
\label{fig:passivity_inset}\vspace{1mm}
\end{figure}

Given Lemma~\ref{lem:passivity_n_finite}, which establishes that the maximum of the passivity function occurs at the Nyquist frequency for odd memory length $N$, we now derive a closed-form passivity condition in terms of the model parameters by evaluating the function at $\omega = \pi/T$.


\begin{proposition}
\label{prop:passivity_nyquist}
Consider the discrete-time FO-SLS virtual environment model, whose fractional derivative $s^{\alpha}$ term is discretized using a short-memory expansion of length~$N$.   Let $H(e^{i \omega T})$ denote the resulting discrete-time transfer function of the FO-SLS model, as given in~ Eqn.~\eqref{eq:SLS}.  For model parameters satisfying $K_1>0$, $B_1>0$, $0< \alpha <1$, $T>0$,  and for odd memory length $N$, a necessary and sufficient condition for passivity of the FO-SLS model is given as

\begin{equation*}
b >
\frac{K_0 T}{2}
+
\frac{K_1 T}{2}
\frac{B_1 \Delta_p}{B_1 \Delta_p + K_1 T^{\alpha}}
\end{equation*}
where  \vspace{-2mm}
\begin{equation*}
\Delta_p
=
\sum_{i=0}^{N} \, (-1)^i \, c_i,
\qquad
c_i = \binom{\alpha}{i} \, (-1)^i
\end{equation*}
are the short-memory discretization coefficients of the Grünwald–Letnikov derivative. This sum can also be expressed using the hypergeometric function as
$$ \Delta_p = \sum_{i=0}^{N} \binom{\alpha}{i} = \, {}_2F_1(-N, -\alpha; 1; -1)$$
\noindent where ${}_2F_1$ is the Gauss hypergeometric function.

\end{proposition}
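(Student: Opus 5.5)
Given Lemma~\ref{lem:passivity_n_finite}, for odd $N$ the supremum of $f(\omega)$ over $(0,\pi/T]$ is attained at $\omega=\pi/T$. The Colgate condition $b>f(\omega)$ for all $\omega\in[0,\omega_N]$ is therefore equivalent to the single inequality $b>f(\pi/T)$. Necessity and sufficiency are both inherited from the Colgate criterion, so the plan reduces to evaluating $f$ at the Nyquist frequency in closed form. If the endpoint $\omega=0$ needs separate treatment, I will check the limit $\omega\to0$ of $f$ and show it does not exceed $f(\pi/T)$; the lemma's monotonic-envelope picture suggests it equals $K_0T/2+\ldots$ evaluated with the low-frequency behaviour and is smaller.

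\textbf{Step 1: the prefactor.} At $\omega=\pi/T$ we have $e^{i\omega T}=-1$. Hence $1-\cos(\omega T)=2$ and $1-e^{-i\omega T}=2$, and the prefactor gives
\begin{equation*}
f(\pi/T)=\frac{T}{4}\,\Re\{2H(-1)\}=\frac{T}{2}\,\Re\{H(-1)\}.
\end{equation*}

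\textbf{Step 2: evaluate $H(-1)$.} Substituting $z=-1$ into Eqn.~\eqref{eq:SLS} makes the discretized operator real, namely $\frac{1}{T^\alpha}\sum_{i=0}^N c_i(-1)^{-i}=\Delta_p/T^\alpha$. Multiplying numerator and denominator by $T^\alpha$ gives
\begin{equation*}
H(-1)=K_0+\frac{K_1B_1\Delta_p}{K_1T^\alpha+B_1\Delta_p},
\end{equation*}
which is real, so taking the real part is trivial. Combining this with Step~1 yields exactly the stated bound.

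\textbf{Step 3: the identity for $\Delta_p$.} Since $c_i=(-1)^i\binom{\alpha}{i}$, we get $(-1)^ic_i=\binom{\alpha}{i}$, and hence $\Delta_p=\sum_{i=0}^N\binom{\alpha}{i}$. For the hypergeometric form, I write $\binom{\alpha}{i}=(-1)^i(-\alpha)_i/i!$. The truncation at $N$ is then encoded by the factor $\binom{N}{i}=(-1)^i(-N)_i/i!$. However, the exact matching to ${}_2F_1(-N,-\alpha;1;-1)=\sum_i \frac{(-N)_i(-\alpha)_i}{(i!)^2}(-1)^i=\sum_i\binom{N}{i}\binom{\alpha}{i}(-1)^i$ needs care. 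This is not literally the partial sum, so I would verify the claimed identity, for instance via a Vandermonde-type or generating-function argument, and regard it as a representational remark rather than part of the passivity proof.

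\textbf{Main obstacle.} The closing argument also requires that the denominator $B_1\Delta_p+K_1T^\alpha$ be positive, so that the bound is well defined. I would show $\Delta_p>0$ for odd $N$ and $0<\alpha<1$. For $i\ge1$ the terms $\binom{\alpha}{i}$ alternate in sign with decreasing magnitude, starting from $1+\alpha$ after the first two terms. Hence $\Delta_p$ lies between consecutive partial sums, all of which exceed $1$. Beyond this, the substantive work, namely the maximality at Nyquist, is carried entirely by Lemma~\ref{lem:passivity_n_finite}, which I take as given.
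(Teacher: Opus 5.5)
Your derivation of the passivity bound follows the paper's route. Lemma~\ref{lem:passivity_n_finite} reduces the frequency-wise condition to the single inequality $b>f(\pi/T)$. At $z=e^{i\pi}=-1$ the prefactor is $T/4$, $1-e^{-i\pi}=2$, and the truncated Gr\"unwald--Letnikov sum collapses to the real number $\Delta_p/T^{\alpha}$, which gives exactly the stated bound. Your check that $\Delta_p>1$ is a worthwhile addition that the statement tacitly needs: for $n\ge 1$ the partial sums lie between $1+\alpha(1+\alpha)/2$ and $1+\alpha$. The endpoint $\omega=0$ needs no separate work. The $0/0$ in the prefactor is removable, so $f$ has a finite limit as $\omega\to 0^+$, and by continuity that limit cannot exceed $\max_{(0,\pi/T]} f=f(\pi/T)$.

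On the hypergeometric form, your suspicion is correct: the identity is false, so the verification you plan would fail. As you computed, ${}_2F_1(-N,-\alpha;1;-1)=\sum_{i=0}^{N}(-1)^i\binom{N}{i}\binom{\alpha}{i}$. For $N=1$ this equals $1-\alpha$, whereas $\Delta_p=1+\alpha$. For $N=3$, $\alpha=1/2$ it equals $-15/16$ against $\Delta_p=23/16$, so substituting it into the bound would not even give a meaningful condition. A correct terminating representation comes from summing in reverse order and using $\binom{\alpha}{k-1}/\binom{\alpha}{k}=k/(\alpha-k+1)$:
\[
\Delta_p=\sum_{i=0}^{N}\binom{\alpha}{i}=\binom{\alpha}{N}\,{}_2F_1\!\left(-N,1;\alpha-N+1;-1\right).
\]
Alternatively, from the tail of the binomial series, $\Delta_p=2^{\alpha}-\binom{\alpha}{N+1}\,{}_2F_1(1,N+1-\alpha;N+2;-1)$. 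Only the finite sum enters the passivity bound, so your plan to treat the hypergeometric expression as a separate remark is sound, provided the remark uses one of these corrected forms.
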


\noindent A proof sketch is provided in the SD~\cite{Supplementary}.

\begin{corollary} 
\label{rem:passivity_infinite_n}
In the asymptotic case as $N \to \infty$, the passivity condition in the Proposition~\ref{prop:passivity_nyquist} reduces to
\begin{equation*}
b >
\frac{K_0 T}{2}
+
\frac{K_1 T}{2}
\cdot
\frac{B_1 2^{\alpha}}{B_1 2^{\alpha} + K_1 T^{\alpha}}.
\end{equation*}
\end{corollary}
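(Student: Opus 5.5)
The corollary follows from Proposition~\ref{prop:passivity_nyquist} by letting $N \to \infty$ in the closed-form bound. The only quantity that depends on $N$ is $\Delta_p = \sum_{i=0}^{N} \binom{\alpha}{i}$, so the plan is to show $\Delta_p \to 2^{\alpha}$ and pass to the limit. The right-hand side is the continuous function $x \mapsto \frac{K_0T}{2} + \frac{K_1T}{2}\frac{B_1 x}{B_1 x + K_1T^{\alpha}}$, evaluated at $x=\Delta_p$.

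\textbf{Step 1: the limit of $\Delta_p$.} The series $\sum_{i\ge 0}\binom{\alpha}{i}x^i = (1+x)^{\alpha}$ has radius of convergence $1$. At the boundary point $x=1$ it converges for $\alpha>-1$, and in particular for $0<\alpha<1$, because
\begin{equation*}
\Bigl|\tbinom{\alpha}{i}\Bigr| \sim \frac{1}{|\Gamma(-\alpha)|\, i^{1+\alpha}} .
\end{equation*}
Hence the partial sums converge absolutely. Abel's theorem then identifies the limit:
\begin{equation*}
\lim_{N\to\infty}\Delta_p = (1+1)^{\alpha} = 2^{\alpha}.
\end{equation*}
Equivalently, $\Delta_p$ is the short-memory symbol $\sum_{i=0}^{N} c_i z^{-i}$ evaluated at $z=-1$, i.e.\ at $\omega = \pi/T$. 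This symbol converges to $(1-z^{-1})^{\alpha}\big|_{z=-1} = 2^{\alpha}$.

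\textbf{Step 2: positivity of the denominator.} Since $0<\alpha<1$, we have $\binom{\alpha}{0}=1$ and $\binom{\alpha}{1}=\alpha>0$, and the terms $\binom{\alpha}{i}$ alternate in sign for $i\ge 1$ with decreasing magnitude. The partial sums therefore lie between $1$ and $1+\alpha$, so $\Delta_p>0$ for every $N$. Together with $B_1>0$ and $K_1T^{\alpha}>0$, the denominator $B_1\Delta_p + K_1T^{\alpha}$ stays bounded away from zero, and the limit can be passed through the fraction.

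\textbf{Step 3: validity of the limiting condition.} To claim that the limiting expression is still the necessary and sufficient bound, two facts are needed. First, the supremum of $f$ is attained at $\pi/T$ in the limit, which is exactly Remark~2: $f$ becomes strictly increasing, so its maximum is $f(\pi/T)$. Second, the Proposition's odd-$N$ bound converges to the same value along odd $N$. Since $\Delta_p$ converges along every subsequence, the odd and even cases agree in the limit.

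\textbf{Main obstacle.} The only delicate point is justifying convergence at the boundary of the disk of convergence, i.e.\ applying Abel's theorem at $x=1$. This is handled by the $i^{-1-\alpha}$ decay of the coefficients. The rest is continuity.
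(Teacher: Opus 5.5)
Your proposal is correct and follows the paper's proof, which simply notes that $\Delta_p=\sum_{i=0}^{N}\binom{\alpha}{i}\to 2^{\alpha}$ and substitutes this into Proposition~\ref{prop:passivity_nyquist}; your coefficient-decay and Abel argument, together with the bound $\Delta_p\in[1,1+\alpha]$, make that limit and its passage through the fraction rigorous. Step~3 is not needed for the corollary as stated, and its remark about even $N$ shows only that $f_N(\pi/T)$ converges, not that the true even-$N$ bound $f_N(\omega^\star)$ does.
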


\begin{proof}
As $N \to \infty$, the alternating sum converges to $$\Delta_{p,\infty} = \sum_{i=0}^{\infty} \binom{\alpha}{i} = 2^{\alpha}, \qquad 0 < \alpha < 1.$$ The results follows from 
Proposition~\ref{prop:passivity_nyquist}.
\end{proof}

\begin{corollary} 
\label{rem:passivity_truncated}
For a finite and odd memory length $N$, a sufficient condition for the passivity of the discrete-time transfer function of the FO-SLS model in~ Eqn.~\eqref{eq:SLS}
is given as
\begin{equation*}
b >
\frac{K_0 T}{2}
+
\frac{K_1 T}{2}
\frac{B_1 \Delta_p}{B_1 \Delta_p + K_1 T^{\alpha}}
\end{equation*}
where  \vspace{-2mm}
\begin{equation*}
\Delta_p = 2^{\alpha} - \binom{\alpha}{N+1}.
\end{equation*}
\end{corollary}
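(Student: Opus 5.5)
The plan is to reduce Corollary~\ref{rem:passivity_truncated} to Proposition~\ref{prop:passivity_nyquist} through a monotonicity argument. Write the right-hand side of the exact condition as
\begin{equation*}
R(\Delta) = \frac{K_0 T}{2} + \frac{K_1 T}{2}\, g(\Delta), \qquad g(\Delta) = \frac{B_1 \Delta}{B_1\Delta + K_1 T^{\alpha}} .
\end{equation*}
Since $g'(\Delta) = B_1 K_1 T^{\alpha}/(B_1\Delta + K_1T^{\alpha})^2 > 0$, $R$ is strictly increasing in $\Delta$ for $\Delta>0$. Hence, if the exact quantity $\Delta_p^{\mathrm{ex}} = \sum_{i=0}^{N}\binom{\alpha}{i}$ from Proposition~\ref{prop:passivity_nyquist} satisfies
\begin{equation*}
0 < \Delta_p^{\mathrm{ex}} \le 2^{\alpha} - \binom{\alpha}{N+1},
\end{equation*}
then $b > R\bigl(2^{\alpha}-\binom{\alpha}{N+1}\bigr)$ implies $b > R(\Delta_p^{\mathrm{ex}})$. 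By Proposition~\ref{prop:passivity_nyquist}, valid for odd $N$, this gives passivity. So everything reduces to this inequality.

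To prove it, I will use the binomial series at $x=1$, $\sum_{i=0}^{\infty}\binom{\alpha}{i} = 2^{\alpha}$, which converges for $\alpha > -1$ and was already used in Corollary~\ref{rem:passivity_infinite_n}. Then
\begin{equation*}
\Delta_p^{\mathrm{ex}} = 2^{\alpha} - \tau_N, \qquad \tau_N = \sum_{i=N+1}^{\infty}\binom{\alpha}{i},
\end{equation*}
and it suffices to show $\tau_N \ge \binom{\alpha}{N+1}$.

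For the tail estimate, note that for $0<\alpha<1$ and $i\ge 1$ the sign of $\binom{\alpha}{i}$ is $(-1)^{i-1}$. The ratio of magnitudes is
\begin{equation*}
\left|\binom{\alpha}{i}\Big/\binom{\alpha}{i-1}\right| = \frac{i-1-\alpha}{i} < 1 .
\end{equation*}
Moreover, $|\binom{\alpha}{i}| \sim i^{-1-\alpha}/|\Gamma(-\alpha)| \to 0$. So $\tau_N$ is an alternating series whose terms decrease in magnitude to zero. By the Leibniz estimate, $\tau_N$ lies between its first term $\binom{\alpha}{N+1}$ and $0$.

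For odd $N$, the index $N+1$ is even, so $\binom{\alpha}{N+1} < 0$. The Leibniz bound then gives $\binom{\alpha}{N+1} \le \tau_N \le 0$. Therefore
\begin{equation*}
\Delta_p^{\mathrm{ex}} = 2^{\alpha} - \tau_N \le 2^{\alpha} - \binom{\alpha}{N+1}.
\end{equation*}
Positivity of $\Delta_p^{\mathrm{ex}}$ follows the same way: the partial sums satisfy $\Delta_p^{\mathrm{ex}} \ge 1 + \alpha - \alpha(1-\alpha)/2 > 0$ for $N\ge1$.

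The only delicate step is the sign bookkeeping. The stated bound is an over-estimate of $\Delta_p$ exactly because $N$ is odd, and it is the increasing dependence of $R$ on $\Delta$ that turns this over-estimate into a sufficient condition. I will also note that the bound is not an equality: for example, $N=1$ and $\alpha=1/2$ give $1.5$ versus about $1.539$.
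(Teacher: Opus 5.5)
Your proof is correct and follows the paper's own argument. The paper uses the same alternating-series (Leibniz) bound for odd $N$, namely $\sum_{i=0}^{N}\binom{\alpha}{i} < 2^{\alpha}-\binom{\alpha}{N+1}$, and then invokes Proposition~\ref{prop:passivity_nyquist}; you additionally make explicit that the right-hand side is increasing in $\Delta$ for $\Delta>0$, which is what turns this bound into a sufficient condition and which the paper leaves implicit.
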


\begin{proof} 
For a finite and odd memory length $N$ and for $ 0 < \alpha < 1$, the alternating nature of the series implies that
$$\sum_{i=0}^{N} \binom{\alpha}{i} + \binom{\alpha}{N+1} < \sum_{i=0}^{\infty} \binom{\alpha}{i}.$$
Hence, the result follows from  Proposition~\ref{prop:passivity_nyquist}, by using the following conservative bound for the truncated sum, 
$$\sum_{i=0}^{N} \binom{\alpha}{i} < 2^\alpha - \binom{\alpha}{N+1}.\vspace{-6mm}$$ 
\end{proof}


\begin{remark}[3] The proposed FO-SLS formulation provides a unified formulation that reduces to classical integer-order~(IO) and fractional-order~(FO)  Kelvin-Voigt, Maxwell, and SLS models under appropriate parameter selections and limits. Table~\ref{tab:passivity_special_cases} summarizes the corresponding reductions and the resulting passivity conditions.
\end{remark}

\begin{table}[h!]
\centering
\caption{Generalization of passivity bounds to existing viscoelastic models.}
\vspace{-1mm}
\label{tab:passivity_special_cases}
\renewcommand{\arraystretch}{2.25} 
\setlength{\tabcolsep}{3pt}      
\footnotesize                     
\begin{tabularx}{\columnwidth}{l l X}
\hline
\textbf{Model} & \textbf{Parameters} & \textbf{Passivity bound} \\
\hline 

FO-SLS  & -- &
$\displaystyle b > \frac{K_0 T}{2} + \frac{K_1 T}{2} \frac{B_1 \Delta_p}{B_1 \Delta_p + K_1 T^{\alpha}}$ \\

FO-Kelvin-Voigt & $K_1 \to \infty$ &
$\displaystyle b > \frac{K_0 T}{2} + \frac{B_1 \Delta_p}{2 T^{\alpha-1}}$\\

FO-Maxwell & $K_0 = 0$ &
$\displaystyle b > \frac{K_1 T}{2} \frac{B_1 \Delta_p}{B_1 \Delta_p + K_1 T^{\alpha}}$ \\

IO-SLS & $\alpha = 1$ &
$\displaystyle b > \frac{K_0 T}{2} + \frac{K_1 B_1 T}{2 B_1 + K_1 T}$ \\

IO-Kelvin-Voigt & $K_1 \to \infty, \alpha = 1$ &
$\displaystyle b > \frac{K_0 T}{2} + B_1$ \\

IO-Maxwell & $K_0 = 0, \alpha = 1$ &
$\displaystyle b > \frac{K_1 B_1 T}{2 B_1 + K_1 T}$ \\

\hline
\end{tabularx}
\end{table}

\vspace{-3mm}
\section{Effective Impedance of FO-SLS Model}

Insight into the rendered dynamics can be obtained by studying the effective impedance of the closed-loop system~\cite{mehling2005increasing, colonnese2015rendered}. Effective impedance definitions decompose the frequency-dependent impedance function into its fundamental components, where the real positive part is associated with the effective damping, while the imaginary part is assigned to the effective spring and effective inertia based on their phase characteristics. For viscoelastic rendering, we focus on the effective spring and effective damping of the FO-SLS model.

\subsection{Effective Stiffness}

Effective stiffness is useful for understanding the net stiffness behavior of a complex virtual environment that may contain multiple springs, dampers, and fractional-order elements. Given that the fractional-order element exhibits a complex response lying between elasticity and viscosity, the effective stiffness definition allows for the extraction of the frequency-dependent stiffness from the overall viscoelastic response.

The effective stiffness of a discrete-time virtual environment with impedance $H(e^{i \omega T})$ is defined as~\cite{effectiveKB}
\begin{equation*}
    ES(\omega) = \Re^{+}\!\big\{ H(e^{i \omega T}) \big\}
\end{equation*}
\noindent where $\Re^{+}\{\cdot\}$ denotes the positive real part. 

\begin{proposition}
\label{prop:ES}
Consider the discrete-time  FO-SLS model-based virtual viscoelastic environment as in Eqn.~\eqref{eq:SLS}. The frequency-dependent effective stiffness of the model is given as 
\begin{equation*}
ES(\omega)
=
K_0
+
\Re\left\{
\frac{
B_1 K_1 \displaystyle S(\omega)
}{
K_1 T^\alpha + B_1 \displaystyle S(\omega)
}
\right\},
\end{equation*}
Equivalently,
\begin{equation*}
ES(\omega)
=
K_0
+
\frac{
B_1 K_1 \,\Re\left\{
\left(K_1 T^\alpha + B_1 S(\omega)\right) S^*(\omega)
\right\}
}{
\left|K_1 T^\alpha + B_1 S(\omega)\right|^2
}.
\end{equation*}
\noindent with $S(\omega)$ and its complex conjugate $S^{*}(\omega)$ are defined as
\begin{equation*}
S(\omega)
=
\sum_{k=0}^{N} c_k e^{ik\omega T},
\qquad
S^*(\omega)
=
\sum_{k=0}^{N} c_k e^{-ik\omega T}.
\end{equation*}
\noindent where $c_k$ are the short-memory discretization coefficients of the Grünwald–Letnikov derivative.
\end{proposition}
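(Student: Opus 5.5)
The plan is a direct computation: substitute $z=e^{i\omega T}$ into Eqn.~\eqref{eq:SLS}, clear the factor $T^{-\alpha}$, and take the real part. No earlier lemma is needed beyond the form of $H(z)$. Multiplying numerator and denominator of the fractional term by $T^{\alpha}$ and setting $z=e^{i\omega T}$ gives
\begin{equation*}
H(e^{i\omega T}) = K_0 + \frac{K_1 B_1 \sum_{k=0}^{N} c_k e^{-ik\omega T}}{K_1 T^{\alpha} + B_1 \sum_{k=0}^{N} c_k e^{-ik\omega T}} = K_0 + \frac{K_1 B_1 S^*(\omega)}{K_1 T^{\alpha} + B_1 S^*(\omega)},
\end{equation*}
using the paper's notation, in which $S^*$ carries the negative exponent.

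The first point to handle is that the statement is written with $S$ rather than $S^*$. Since $K_0,K_1,B_1,T^\alpha$ and the coefficients $c_k$ are all real, the fraction written with $S$ is exactly the complex conjugate of the fraction written with $S^*$. Taking the real part is invariant under conjugation, so the first displayed formula follows with $S$ in place of $S^*$. The constant $K_0$ is real and passes through the real-part operator unchanged.

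For the equivalent form, I multiply numerator and denominator by the conjugate of the denominator. Because $K_1T^\alpha$ and $B_1$ are real, $\overline{K_1T^\alpha+B_1S}=K_1T^\alpha+B_1S^*$. This gives
\begin{equation*}
\Re\left\{\frac{B_1K_1 S}{K_1T^\alpha+B_1S}\right\}=\frac{B_1K_1\,\Re\{S\,(K_1T^\alpha+B_1S^*)\}}{|K_1T^\alpha+B_1S|^2}.
\end{equation*}
The quantity $S(K_1T^\alpha+B_1S^*)$ is the complex conjugate of the expression $(K_1T^\alpha+B_1S)S^*$ appearing in the statement, so the two have equal real parts. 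Expanding also gives the explicit form $K_1T^\alpha\sum_k c_k\cos(k\omega T)+B_1|S|^2$, which is a useful sanity check.

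The main obstacle is the ``positive real part'' $\Re^+$ in the definition of $ES$. Reading it as the real part of $H$, the computation above is complete. To justify that this real part is in fact nonnegative, it suffices that $\Re S(\omega)=\sum_{k=0}^N c_k\cos(k\omega T)\ge 0$, because $B_1|S|^2\ge 0$ always. In the limit $N\to\infty$ this holds, since $\sum_k c_k e^{-ik\omega T}=(1-e^{-i\omega T})^\alpha$ has argument in $[-\alpha\pi/2,\alpha\pi/2]$. For finite $N$ I would try to bound the tail: $c_k<0$ for $k\ge1$, $|c_k|$ is decreasing, and the tail sums to $\sum_{k>N}|c_k|$, which is small. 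If a uniform sign argument fails for some finite $N$, I would state the proposition with $\Re^+$ understood as the real part, nonnegative whenever $\Re S(\omega)\ge0$, and note that this is where positivity enters.
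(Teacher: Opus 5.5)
Your proposal is correct and follows essentially the same direct computation as the paper's proof: substitute $z=e^{i\omega T}$ into Eqn.~\eqref{eq:SLS}, clear the factor $T^{-\alpha}$, rationalize with the conjugate denominator, and use that $\Re$ is invariant under conjugation, which, as you note, is why writing $S$ in place of the $S^*$ that actually appears in $H(e^{i\omega T})$ is harmless here. The positivity digression is not needed for the statement, but it closes at once from the facts you already list: since $c_0=1$ and $c_k<0$ for $k\ge 1$, you get $\Re S(\omega)\ge\sum_{k=0}^{N}c_k=\binom{N-\alpha}{N}>0$ for every finite $N$, and this also keeps the denominator $K_1T^{\alpha}+B_1S(\omega)$ away from zero.
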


\noindent A proof sketch is provided in the SD~\cite{Supplementary}.

\begin{corollary}
\label{cor:ES_infmemory}
In the asymptotic case as $N \to \infty$, the  frequency dependent effective stiffness of the model in Proposition~\ref{prop:ES} reduces to
\begin{equation*}
\resizebox{\columnwidth}{!}{$
\begin{array}{l}
    ES(\omega) = K_0 \, +\\[6pt]
    \dfrac{
         K_1 B_1\Big(
            B_1 4^{\alpha} \sin^{2\alpha}\!\big(\tfrac{\omega T}{2}\big)
            +
            K_1 T^{\alpha} 2^{\alpha} \sin^{\alpha}\!\big(\tfrac{\omega T}{2}\big)
            \cos\!\big(\tfrac{(\omega T - \pi)\alpha}{2}\big)
        \Big)
    }{
        K_1^{2} T^{2\alpha}
        +
        B_1^{2} 4^{\alpha} \sin^{2\alpha}\!\big(\tfrac{\omega T}{2}\big)
        +
        2 B_1 K_1 T^{\alpha} 2^{\alpha} \sin^{\alpha}\!\big(\tfrac{\omega T}{2}\big)
        \cos\!\big(\tfrac{(\omega T - \pi)\alpha}{2}\big)
    }.
\end{array}
$}
\end{equation*}\label{asym_K}
\end{corollary}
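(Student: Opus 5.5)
The plan is to pass to the limit $N\to\infty$ inside the closed-form expression of Proposition~\ref{prop:ES}, using its second (rationalized) form. The only $N$-dependent quantity there is $S(\omega)=\sum_{k=0}^{N}c_k e^{ik\omega T}$. Writing $\theta=\omega T\in(0,\pi]$ and $c_k=(-1)^k\binom{\alpha}{k}$, we have $S(\omega)=\sum_{k=0}^{N}\binom{\alpha}{k}(-e^{i\theta})^k$. This is the partial sum of the binomial series for $(1-z)^\alpha$ evaluated at $z=e^{i\theta}$.

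\textbf{Step 1 (convergence).} I would argue that for $0<\alpha<1$ the binomial series converges on the unit circle except at $z=1$. Since $\theta\in(0,\pi]$, the point $z=e^{i\theta}\neq 1$. One route is the standard result that $\sum\binom{\alpha}{k}z^k$ converges for $|z|=1$, $z\neq 1$, when $\alpha>-1$. Alternatively, one can use Dirichlet's test: the coefficients $|\binom{\alpha}{k}|$ eventually decrease monotonically to zero (they behave like $k^{-\alpha-1}$), and the partial sums of $(-e^{i\theta})^k$ are bounded. Abel's theorem then identifies the limit with the principal branch, so
\[
S_\infty(\omega)=\bigl(1-e^{i\theta}\bigr)^{\alpha}.
\]
This step is the main point of care, because it requires convergence at the boundary of the disk of convergence and the correct branch choice.

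\textbf{Step 2 (polar form).} I would use $1-e^{i\theta}=-2i\sin(\theta/2)e^{i\theta/2}=2\sin(\theta/2)\,e^{i(\theta-\pi)/2}$. For $\theta\in(0,\pi]$ the modulus $2\sin(\theta/2)$ is positive and the argument $(\theta-\pi)/2\in(-\pi/2,0]$ lies within the principal range. Hence
\[
S_\infty(\omega)=2^\alpha\sin^\alpha\!\big(\tfrac{\theta}{2}\big)\,e^{i\alpha(\theta-\pi)/2},
\]
which gives
\[
|S_\infty|^2=4^\alpha\sin^{2\alpha}\!\big(\tfrac{\theta}{2}\big),\qquad
\Re S_\infty=2^\alpha\sin^\alpha\!\big(\tfrac{\theta}{2}\big)\cos\!\big(\tfrac{(\theta-\pi)\alpha}{2}\big).
\]
If the other sign convention $e^{-ik\theta}$ were used instead, only the sign of the argument would flip. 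Because cosine is even, this would leave $\Re S_\infty$ and $|S_\infty|$ unchanged.

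\textbf{Step 3 (substitution).} Set $a=K_1T^\alpha$ and $b=B_1$; both are real. In the second form of Proposition~\ref{prop:ES}, the numerator becomes
\[
\Re\{(a+bS)S^*\}=a\,\Re S+b\,|S|^2.
\]
The denominator becomes
\[
|a+bS|^2=a^2+b^2|S|^2+2ab\,\Re S.
\]
Both expressions are continuous in $S$. The denominator stays positive, since $a>0$ and $\Re S_\infty\ge 0$ because $|\alpha(\theta-\pi)/2|<\pi/2$. Therefore the limit $N\to\infty$ can be taken term by term. Inserting the expressions from Step~2 and multiplying the numerator by $K_1B_1$ reproduces exactly the displayed formula, which completes the proof.
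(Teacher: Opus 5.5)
Your proposal is correct and follows the same route as the paper. You let $N\to\infty$ so that the short-memory sum becomes $(1-e^{\mp i\omega T})^{\alpha}$ via the binomial series, write it in polar form as $2^{\alpha}\sin^{\alpha}(\omega T/2)\,e^{\mp i\alpha(\omega T-\pi)/2}$, and substitute into the rationalized form of Proposition~\ref{prop:ES}; as you note, the sign convention does not affect the real part.

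One small correction to Step~1. The series converges absolutely on the whole unit circle, because $c_k<0$ for $k\ge 1$ and $|c_k|=O(k^{-1-\alpha})$, so Dirichlet's test is unnecessary. Your stated form of that test is also slightly off: for $k\ge1$, $\binom{\alpha}{k}(-e^{i\theta})^k=-|\binom{\alpha}{k}|\,e^{ik\theta}$, so the companion sequence should be $e^{ik\theta}$, not $(-e^{i\theta})^k$. The partial sums of $(-e^{i\theta})^k$ are unbounded at $\theta=\pi$.
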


\noindent A proof sketch is provided in the SD~\cite{Supplementary}.


\begin{remark}[4]
    Using complex terms, the effective stiffness formula can also be recast more compactly as follows: \vspace{1mm}
\begin{equation*}
ES(\omega)
=K_0
+\,\Re\!\left\{
\frac{K_1 B_1\,(1-e^{-i \omega T})^{\alpha}}
     {K_1T^{\alpha}+B_1\,(1-e^{-i \omega T})^{\alpha}}
\right\}.
\end{equation*}
\end{remark}


\begin{corollary}
\label{cor:ES_lowfreq}
    The effective stiffness of the model in Proposition~\ref{prop:ES} at low frequencies, that is, around $\omega = 0$, for an arbitrary finite number of memory terms~$N$, is given by
\begin{equation*}
    ES = K_0 + \frac{ K_1 B_1\,\Delta_s}{B_1 \Delta_s + K_1 T^{\alpha}},
\end{equation*}
\textit{where} \vspace{-3mm}
\begin{equation*}
    \Delta_s = \binom{N - \alpha}{N}
\end{equation*}
is the generalized binomial term. \label{low_freq_K}
\end{corollary}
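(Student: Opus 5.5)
The plan is to evaluate the expression of Proposition~\ref{prop:ES} at $\omega=0$ (and to argue it is continuous there). At $\omega=0$ every exponential $e^{ik\omega T}$ equals one, so $S(0)=S^*(0)=\sum_{k=0}^{N} c_k$ is a real number, which we call $\Delta_s$. With $S$ real, the real part in Proposition~\ref{prop:ES} is simply the real rational expression
\begin{equation*}
ES(0)=K_0+\frac{K_1B_1\,\Delta_s}{K_1T^{\alpha}+B_1\Delta_s},
\end{equation*}
which is exactly the claimed form. For the low-frequency statement ``around $\omega=0$'', we note that $S(\omega)$ is a trigonometric polynomial, hence smooth. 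As long as the denominator $K_1T^\alpha+B_1S(\omega)$ does not vanish near $0$, which holds because $\Delta_s>0$ (shown below), $ES(\omega)=ES(0)+O(\omega)$. This justifies using $ES(0)$ as the low-frequency effective stiffness.

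The remaining step is the closed form of the partial sum $\Delta_s=\sum_{k=0}^{N}(-1)^k\binom{\alpha}{k}$. We use the standard identity $\sum_{k=0}^{N}(-1)^k\binom{\alpha}{k}=(-1)^N\binom{\alpha-1}{N}$. It follows by induction on $N$ from Pascal's rule $\binom{\alpha}{k}=\binom{\alpha-1}{k}+\binom{\alpha-1}{k-1}$, since the alternating sum telescopes. Alternatively, we can read it off the recursion $c_i=\frac{i-\alpha-1}{i}c_{i-1}$: by induction, the partial sums $P_N$ satisfy $P_N=\prod_{i=1}^{N}\frac{i-\alpha}{i}$. Indeed, assuming $P_{N-1}=\prod_{i=1}^{N-1}\frac{i-\alpha}{i}$, one checks $c_N=P_{N-1}\cdot\frac{-\alpha}{N}$, and therefore $P_N=P_{N-1}\bigl(1-\frac{\alpha}{N}\bigr)$. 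Finally, $(-1)^N\binom{\alpha-1}{N}=\prod_{i=1}^{N}\frac{i-\alpha}{i}=\binom{N-\alpha}{N}$, which gives $\Delta_s=\binom{N-\alpha}{N}$.

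Positivity is then immediate for $0<\alpha<1$, since each factor $(i-\alpha)/i$ is positive. This secures the nonvanishing denominator needed above, because $K_1,B_1,T>0$. The main obstacle is the base-case bookkeeping of the induction, namely verifying $c_N=-\frac{\alpha}{N}P_{N-1}$. We would handle it by writing both $c_N$ and $P_{N-1}$ explicitly as products and comparing them factor by factor. As a consistency check, $\Delta_s\to 0$ as $N\to\infty$, which matches the vanishing of $(1-e^{-i\omega T})^\alpha$ at $\omega=0$ in the infinite-memory Corollary~\ref{cor:ES_infmemory}, giving $ES(0)=K_0$ there.
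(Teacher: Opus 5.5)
Your proposal is correct and follows essentially the paper's route: you take the $\omega=0$ term of Proposition~\ref{prop:ES}, where $S(0)=\sum_{k=0}^{N}c_k$ is real, and the alternating partial sum of the Gr\"unwald--Letnikov coefficients reduces to $(-1)^N\binom{\alpha-1}{N}=\binom{N-\alpha}{N}$. Your induction via $c_N=-\tfrac{\alpha}{N}P_{N-1}$ (this is the inductive step, not the base case) and the positivity of $\Delta_s$, which keeps the denominator nonzero, fill in the details cleanly.
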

\noindent A proof sketch is provided in the SD~\cite{Supplementary}.

Corollary~\ref{cor:ES_infmemory} focuses on the effective stiffness of the asymptotic case as $N \to \infty$, while Corollary~\ref{cor:ES_lowfreq} captures the finite memory case around low frequencies, as the series expansion is very complex for the most general case. On the other hand, the low-frequency expression is quite useful in practice, since human excitations typically lie well below the Nyquist frequency, within the $0$–$10$~Hz band. Accordingly, for many applications, it may be sufficient to evaluate the effective stiffness around $\omega = 0$ to characterize the perceived stiffness behavior. Furthermore, the asymptotic case is recovered as $N$ is increased.

The effective stiffness plots investigating the influence of different memory lengths $N$ and fractional orders $\alpha$ of the transfer function $H(z)$ in Eqn.~\eqref{eq:SLS} are provided in the SD~\cite{Supplementary}.

\subsection{Effective Damping}

Effective damping complements effective stiffness to describe the overall response of a viscoelastic virtual environment, capturing the damping contributions of the FO elements. 

The effective damping of a discrete-time virtual environment with impedance $H(e^{i \omega T})$ is defined as~\cite{effectiveKB}
\begin{equation*}
    ED(\omega)
    =
    \frac{1}{\omega}
    \Im^{+}\!\big\{ H(e^{i \omega T}) \big\},
\end{equation*}
where $\Im^{+}\{\cdot\}$ denotes the positive imaginary part.

\begin{table*}[t!]
\centering
\caption{Generalization of effective stiffness \(ES(\omega)\) and damping \(ED(\omega)\) to existing viscoelastic models as special cases.}
\vspace{-1mm}
\label{tab:ES_special_cases}
\renewcommand{\arraystretch}{2.5}
\resizebox{.85\linewidth}{!}{
\begin{tabular}{c c c c}  
\hline
\textbf{Model} & \textbf{Parameter selection} & \textbf{Effective stiffness \(ES(\omega)\)} & \textbf{Effective damping \(ED(\omega)\)} \\
\hline

FO SLS  & -- &
\(\displaystyle
K_0
+\,\Re\!\left\{
\frac{K_1B_1\,(1-e^{-i \omega T})^{\alpha}}
     {K_1T^{\alpha}+B_1\,(1-e^{-i \omega T})^{\alpha}}
\right\} \) & \(\displaystyle
\frac{1}{\omega}\,
\Im\!\left\{
\frac{
K_1B_1(1-e^{-i \omega T})^{\alpha}
}{
K_1T^{\alpha}+B_1(1-e^{-i \omega T})^{\alpha}
}
\right\}
\) \\

FO Kelvin--Voigt & \(K_1 \to \infty\) &
\(\displaystyle
K_0+\frac{B_1}{T^{\alpha}}\,
\Re\!\left\{(1-e^{-i \omega T})^{\alpha}\right\} \) & \(\displaystyle
\frac{B_1}{\omega\,T^{\alpha}}\,
\Im\!\left\{
(1-e^{-i \omega T})^{\alpha}
\right\}
\) \\

FO Maxwell & \(K_0 = 0\) &
\(\displaystyle
\,\Re\!\left\{
\frac{K_1B_1\,(1-e^{-i \omega T})^{\alpha}}
     {K_1T^{\alpha}+B_1\,(1-e^{-i \omega T})^{\alpha}}
\right\} \) & \(\displaystyle
\frac{1}{\omega}\,
\Im\!\left\{
\frac{
K_1B_1(1-e^{-i \omega T})^{\alpha}
}{
K_1T^{\alpha}+B_1(1-e^{-i \omega T})^{\alpha}
}
\right\}
\) \\

IO SLS & \(\alpha = 1\) &
\(\displaystyle
K_0
+\,\Re\!\left\{
\frac{K_1B_1\,(1-e^{-i \omega T})}
     {K_1T+B_1\,(1-e^{-i \omega T})}
\right\} \) & \(\displaystyle
\frac{1}{\omega}\,
\Im\!\left\{
\frac{K_1B_1\,(1-e^{-i \omega T})}
{K_1T+B_1\,(1-e^{-i \omega T})}
\right\}\) \\

IO Kelvin--Voigt & \(K_1 \to \infty,\;\alpha=1\) &
\(\displaystyle
K_0+\frac{B_1}{T}\big(1-\cos(\omega T)\big) \) & \(\displaystyle
\frac{B_1\sin(\omega T)}{\omega T}
\) \\

IO Maxwell & \(K_0 = 0,\;\alpha=1\) &
\(\displaystyle
\,\Re\!\left\{
\frac{K_1B_1\,(1-e^{-i \omega T})}
     {K_1T+B_1\,(1-e^{-i \omega T})}
\right\} \) & \(\displaystyle
\frac{1}{\omega}\,
\Im\!\left\{
\frac{K_1B_1\,(1-e^{-i \omega T})}
{K_1T+B_1\,(1-e^{-i \omega T})}
\right\}
\) \\

\hline \normalsize
\end{tabular} }\vspace{-7mm} 
\end{table*}

\begin{proposition}
\label{prop:ED}
Consider the discrete-time FO-SLS model-based virtual viscoelastic environment as in Eqn.~\eqref{eq:SLS}. The frequency-dependent effective damping of the model is given as 
\begin{equation*}
ED(\omega)
=
\frac{1}{\omega}
\Im\left\{
\frac{
B_1K_1 S(\omega)
}{
K_1T^\alpha+B_1S(\omega)
}
\right\},
\end{equation*}
Equivalently,
\begin{equation*}
ED(\omega)
=
\frac{1}{\omega}
\frac{
B_1K_1
\Im\left\{
\left(K_1T^\alpha+B_1S(\omega)\right)S^*(\omega)
\right\}
}{
\left|K_1T^\alpha+B_1S(\omega)\right|^2
}.
\end{equation*}
\noindent with $S(\omega)$ and its complex conjugate $S^{*}(\omega)$ are defined as
\begin{equation*}
S(\omega)
=
\sum_{k=0}^{N} c_k e^{ik\omega T},
\qquad
S^*(\omega)
=
\sum_{k=0}^{N} c_k e^{-ik\omega T}.
\end{equation*}
\noindent where $c_k$ denote the short-memory discretization coefficients of the Grünwald–Letnikov derivative.
\end{proposition}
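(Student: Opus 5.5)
The plan is a direct computation that mirrors the proof of Proposition~\ref{prop:ES}, replacing the real part with the imaginary part and applying the $1/\omega$ scaling from the definition of $ED(\omega)$.

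First, evaluate Eqn.~\eqref{eq:SLS} on the unit circle, $z=e^{i\omega T}$. The discretized fractional operator becomes $\frac{1}{T^\alpha}\sum_{k=0}^{N} c_k e^{\mp ik\omega T}$. To match the notation of the statement, I will write this polynomial as $S(\omega)$; the sign convention in the exponent only swaps $S$ and $S^*$. Because the $c_k$ are real, $S^*(\omega)$ is exactly the complex conjugate of $S(\omega)$, so the two choices are related by conjugation. Multiplying the numerator and denominator of the second term of $H$ by $T^\alpha$ then gives
\begin{equation*}
H(e^{i\omega T}) = K_0 + \frac{B_1K_1 S(\omega)}{K_1T^\alpha + B_1 S(\omega)}.
\end{equation*}

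Second, take imaginary parts. Since $K_0$ is real, it contributes only to the effective stiffness and drops out here. Applying the definition $ED(\omega)=\frac{1}{\omega}\Im\{H(e^{i\omega T})\}$ (with the positive-part selection understood as in~\cite{effectiveKB}) yields the first displayed formula.

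For the equivalent form, rationalize the fraction using $\frac{a}{b}=\frac{a\,b^*}{|b|^2}$. The denominator $|K_1T^\alpha+B_1S(\omega)|^2$ is real and positive, since $K_1T^\alpha>0$ and $B_1>0$; nonvanishing should be checked or assumed, as in Proposition~\ref{prop:ES}. It can therefore be pulled outside $\Im\{\cdot\}$ along with the real factor $B_1K_1$. Finally, expand $S(\omega)(K_1T^\alpha+B_1S(\omega))^*$ and note that it equals the conjugate of $(K_1T^\alpha+B_1S)S^*$ up to the conjugation convention. This matches the stated expression $\Im\{(K_1T^\alpha+B_1S)S^*\}$ once the orientation of $S$ used in $H$ is fixed consistently with Proposition~\ref{prop:ES}.

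The main obstacle is purely bookkeeping. One must fix the sign convention between $z^{-k}$ and $e^{ik\omega T}$ so that the sign of the imaginary part, and hence the positivity of the damping, is consistent. One must also justify that the positive-part operator $\Im^+$ reduces to $\Im$ here, which should be argued from $B_1,K_1>0$ and the phase of $S(\omega)$ lying in $(0,\alpha\pi/2]$-type sectors for the infinite-memory limit. For finite $N$, I would simply adopt the convention that $ED$ is reported via $\Im$, as done for $ES$.
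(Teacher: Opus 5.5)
Your route (substitute $z=e^{i\omega T}$, clear $T^{\alpha}$, rationalize) is the natural one. However, the step you defer as ``bookkeeping'' is where it fails, and no choice of convention fixes it. Evaluating Eqn.~\eqref{eq:SLS} at $z=e^{i\omega T}$ gives $\sum_k c_k e^{-ik\omega T}$. In the statement's notation this is $S^*(\omega)$; there is no ``$\mp$''. So
\begin{equation*}
H(e^{i\omega T}) = K_0 + \frac{B_1K_1\, S^*(\omega)}{K_1T^\alpha + B_1 S^*(\omega)} .
\end{equation*}
Relabeling $S^*$ as $S$ conjugates the second term. That is harmless for the real part, and hence for Proposition~\ref{prop:ES}, but it flips the sign of the imaginary part. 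Your identity $H(e^{i\omega T})=K_0+B_1K_1S/(K_1T^\alpha+B_1S)$ is therefore false. Read literally with $S=\sum_k c_k e^{ik\omega T}$, the first display equals $-\frac{1}{\omega}\Im\{H(e^{i\omega T})\}$.

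Your own rationalization exposes this. You get $\Im\{S\,(K_1T^\alpha+B_1S)^*\}=K_1T^\alpha\,\Im\{S\}$, whereas $\Im\{(K_1T^\alpha+B_1S)S^*\}=K_1T^\alpha\,\Im\{S^*\}=-K_1T^\alpha\,\Im\{S\}$. So the two displays of the statement differ by a sign for any fixed $S$, and no single orientation makes both follow; ``up to the conjugation convention'' hides precisely this sign. A quick check with $\alpha=1$, $K_1\to\infty$: the first display gives $-B_1\sin(\omega T)/(\omega T)$, while the second gives $+B_1\sin(\omega T)/(\omega T)$, which is the entry in Table~\ref{tab:ES_special_cases}.

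The consistent argument rationalizes $B_1K_1S^*/(K_1T^\alpha+B_1S^*)$ by $K_1T^\alpha+B_1S$, using $|K_1T^\alpha+B_1S^*|=|K_1T^\alpha+B_1S|$. This gives the second display exactly. The first display holds with $S^*$ in place of $S$, consistent with the $(1-e^{-i\omega T})^\alpha$ form of Remark~5 and Corollary~\ref{cor:ED_infmemory}.

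The two side conditions you leave as assumptions can then be proved for every finite $N$.
\begin{itemize}
\item Nonvanishing denominator: for $0<\alpha<1$ the recursion gives $c_k<0$ for $k\ge 1$. Hence $\Re\{S(\omega)\}\ge\sum_{k=0}^{N}c_k=\binom{N-\alpha}{N}>0$, so $K_1T^\alpha+B_1S$ never vanishes.
\item Positivity: $\Im\{S^*(\omega)\}=\sum_{k=1}^{N}|c_k|\sin(k\omega T)$, and $(k+1)|c_{k+1}|=(k-\alpha)|c_k|<k|c_k|$. Abel summation against the Fej\'er--Jackson--Gronwall partial sums $\sum_{j\le k}\sin(j\omega T)/j>0$ then shows this sum is positive on $(0,\pi/T)$.
\end{itemize}
Therefore $\Im\{H(e^{i\omega T})\}=B_1K_1^2T^\alpha\,\Im\{S^*\}/|K_1T^\alpha+B_1S|^2\ge 0$. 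So $\Im^+=\Im$ is a provable fact, not a reporting convention.
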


\noindent A proof sketch is provided in the SD~\cite{Supplementary}.

\begin{corollary}
\label{cor:ED_infmemory}
In the asymptotic case as $N \to \infty$, the frequency dependent effective damping of the model in Proposition~\ref{prop:ED} reduces to
\begin{equation*}
\resizebox{\columnwidth}{!}{$
\begin{array}{l}
    ED(\omega) = \\[6pt]
    \quad
    \dfrac{
        - B_1 K_1^{2} T^{a} 2^{a}
        \sin^{a}\!\big(\tfrac{\omega T}{2}\big)
        \sin\!\big(\tfrac{(\omega T - \pi)a}{2}\big)
    }{
        \omega \Big(
            K_1^{2} T^{2a}
            +
            B_1^{2} 4^{a} \sin^{2a}\!\big(\tfrac{\omega T}{2}\big)
            +
            2 B_1 K_1 T^{a} 2^{a} \sin^{a}\!\big(\tfrac{\omega T}{2}\big)
            \cos\!\big(\tfrac{(\omega T - \pi)a}{2}\big)
        \Big)
    }.
\end{array}
$}
\end{equation*}\label{asym_B}
\noindent A proof sketch is provided in the SD~\cite{Supplementary}.
\end{corollary}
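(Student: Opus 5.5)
Setting $N\to\infty$ in Proposition~\ref{prop:ED} replaces the truncated generating sum by its closed form. The binomial series gives $\sum_{k=0}^{\infty} c_k z^{k}=(1-z)^{\alpha}$ for $|z|\le 1$, and it converges on the unit circle away from $z=1$ because $c_k=O(k^{-1-\alpha})$ for $0<\alpha<1$. Following Remark~4 and Table~\ref{tab:ES_special_cases}, we work with $W(\omega)=(1-e^{-i\omega T})^{\alpha}$, the conjugate-orientation form that matches $z^{-1}$ in Eqn.~\eqref{eq:SLS}. We then write $W$ in polar form. Since $1-e^{-i\omega T}=e^{-i\omega T/2}\,(2i\sin(\omega T/2))=2\sin(\omega T/2)\,e^{i(\pi-\omega T)/2}$, and the principal branch applies for $\omega T\in(0,\pi]$ because the argument lies in $[0,\pi/2)$, we obtain
\begin{equation*}
W(\omega)=2^{\alpha}\sin^{\alpha}\!\big(\tfrac{\omega T}{2}\big)\,e^{-i\alpha(\omega T-\pi)/2}.
\end{equation*}
Abbreviate $r=2^{\alpha}\sin^{\alpha}(\omega T/2)$ and $\theta=\alpha(\omega T-\pi)/2$, so that $W=re^{-i\theta}$.

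Next, substitute into the second form of Proposition~\ref{prop:ED}, with $S$ replaced by $W$. The denominator is
\begin{equation*}
|K_1T^{\alpha}+B_1W|^2=K_1^2T^{2\alpha}+B_1^2r^2+2B_1K_1T^{\alpha}r\cos\theta,
\end{equation*}
and $r^2=4^{\alpha}\sin^{2\alpha}(\omega T/2)$. This matches the stated denominator, and it is the same one already found in Corollary~\ref{cor:ES_infmemory}.

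For the numerator, we compute $(K_1T^{\alpha}+B_1W)\overline{W}$ in the conjugate pairing that isolates the $W$-dependent term: $(K_1T^{\alpha}+B_1\overline{W})W=K_1T^{\alpha}W+B_1r^2$. The term $B_1r^2$ is real and drops out of the imaginary part. What remains is $\Im\{K_1T^{\alpha}re^{-i\theta}\}=-K_1T^{\alpha}r\sin\theta$. Multiplying by $B_1K_1$ and dividing by $\omega$ and the denominator gives exactly
\begin{equation*}
-\frac{B_1K_1^2T^{\alpha}2^{\alpha}\sin^{\alpha}(\omega T/2)\sin\!\big(\alpha(\omega T-\pi)/2\big)}{\omega\,(\cdots)}.
\end{equation*}
As a check, $\theta\le 0$ on the interval, so this quantity is nonnegative, which is consistent with taking $\Im^+$.

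The main obstacle is the bookkeeping of conjugation and branch. Proposition~\ref{prop:ED} is written with $S(\omega)=\sum c_k e^{ik\omega T}$, whereas Eqn.~\eqref{eq:SLS} evaluated at $z=e^{i\omega T}$ produces $e^{-ik\omega T}$. The sign of the imaginary part depends on which of these is used, so I would fix the convention from Eqn.~\eqref{eq:SLS} and Remark~4 first, and then check that the resulting sign agrees with Table~\ref{tab:ES_special_cases}. A second point needs justification: passing $N\to\infty$ inside the rational function is legitimate because the series converges uniformly on compact subsets of $(0,\pi/T]$, and the denominator $|K_1T^{\alpha}+B_1W|$ is bounded away from zero there, since $|\theta|<\pi/2$ gives $\cos\theta>0$.
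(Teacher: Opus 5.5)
Your proposal is correct and follows essentially the same route as the paper: let $N\to\infty$ so that the binomial series gives $(1-e^{-i\omega T})^{\alpha}=2^{\alpha}\sin^{\alpha}(\omega T/2)\,e^{i\alpha(\pi-\omega T)/2}$, then rationalize the series-branch term and read off its imaginary part divided by $\omega$. One bookkeeping point to clean up: under the proposition's definitions $S\to\overline{W}$ and $S^{*}\to W$ (not $S\to W$), so the product $(K_1T^{\alpha}+B_1\overline{W})W$ that you actually evaluate is the correct one, whereas $(K_1T^{\alpha}+B_1W)\overline{W}$, which you name first, is its complex conjugate and would reverse the sign of the result.
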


\begin{remark}[5]
     Using complex terms, the effective damping formula can also be recast more compactly as follows: \vspace{1mm}
\begin{equation*}
ED(\omega)
=
\frac{1}{\omega}\,
\Im\!\left\{
\frac{K_1
B_1\big(1-e^{-i \omega T}\big)^{\alpha}
}{
K_1T^{\alpha}
+
B_1\big(1-e^{-i \omega T}\big)^{\alpha}
}
\right\}.
\end{equation*}
\end{remark}

\smallskip
\begin{corollary}
\label{cor:ED_lowfreq}
    The effective damping of the model in Proposition~\ref{prop:ED} at low frequencies, that is, around $\omega = 0$, for an arbitrary number of memory terms $N$ is given by
\begin{equation*}
    ED =     \frac{B_1 K_1^{2} T^{\alpha+1} \,\Delta_d}{\big( B_1 \Delta_s + K_1 T^{\alpha} \big)^{2}},
\end{equation*}
\textit{where} \vspace{-3mm}
\begin{equation*}
    \Delta_d = \alpha \, \binom{N - \alpha}{N - 1},
    \qquad
    \Delta_s = \binom{N - \alpha}{N}.
\end{equation*}
are the generalized binomial terms. \label{low_freq_B}

\end{corollary}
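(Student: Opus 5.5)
The plan is a first-order Taylor expansion of the impedance in $\omega$ around $\omega=0$, using the closed form of Proposition~\ref{prop:ED}. Write $H(e^{i\omega T}) = K_0 + G(S(\omega))$ with
\begin{equation*}
G(S) = \frac{B_1K_1 S}{K_1T^\alpha + B_1 S},
\end{equation*}
where $S(\omega)$ is the discretized operator evaluated on the unit circle. I will use the convention consistent with Eqn.~\eqref{eq:SLS}, namely $z^{-k}=e^{-ik\omega T}$, so that $S(\omega)=\sum_{k=0}^{N} c_k e^{-ik\omega T}$. Since $K_0$ is real, it contributes nothing to the imaginary part. Hence
\begin{equation*}
ED = \lim_{\omega\to 0}\frac{1}{\omega}\,\Im\{G(S(\omega))\}.
\end{equation*}

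First, expand $S$ to first order:
\begin{equation*}
S(\omega) = S_0 + i\omega S_1 + O(\omega^2), \qquad S_0=\sum_{k=0}^N c_k, \qquad S_1 = -T\sum_{k=0}^N k\,c_k .
\end{equation*}
Both coefficients are real. The core of the proof is evaluating these two finite sums in closed form.

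For $S_0$, I will use $c_k=(-1)^k\binom{\alpha}{k}$ together with the standard partial alternating-sum identity
\begin{equation*}
\sum_{k=0}^{N}(-1)^k\binom{\alpha}{k}=(-1)^N\binom{\alpha-1}{N},
\end{equation*}
which is provable by induction via Pascal's rule. The upper-negation identity $(-1)^N\binom{\alpha-1}{N}=\binom{N-\alpha}{N}$ then gives $S_0=\Delta_s$, matching Corollary~\ref{cor:ES_lowfreq}.

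For $S_1$, I will use $k\binom{\alpha}{k}=\alpha\binom{\alpha-1}{k-1}$ and reindex $j=k-1$. This gives
\begin{equation*}
\sum_{k=0}^N k c_k = -\alpha\sum_{j=0}^{N-1}(-1)^j\binom{\alpha-1}{j}.
\end{equation*}
Applying the same partial-sum identity with $\alpha-1$ in place of $\alpha$, followed by upper negation, yields
\begin{equation*}
\sum_{k=0}^N k c_k = -\alpha\binom{N-\alpha}{N-1}.
\end{equation*}
Therefore $S_1 = T\alpha\binom{N-\alpha}{N-1} = T\Delta_d$.

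Next, since $G$ is analytic near the real point $S_0$ and $K_1T^\alpha + B_1 S_0 \neq 0$ (because $\Delta_s>0$ for $0<\alpha<1$), the chain rule gives
\begin{equation*}
G(S(\omega)) = G(S_0) + G'(S_0)\, i\omega T\Delta_d + O(\omega^2).
\end{equation*}
Here $G(S_0)$ and $G'(S_0)$ are real, with
\begin{equation*}
G'(S) = \frac{B_1K_1^2T^\alpha}{(K_1T^\alpha+B_1S)^2}.
\end{equation*}
Taking the imaginary part, dividing by $\omega$ and letting $\omega\to0$ gives
\begin{equation*}
ED=\frac{B_1K_1^2T^{\alpha+1}\Delta_d}{(B_1\Delta_s+K_1T^\alpha)^2}.
\end{equation*}
This value is positive because $\Delta_d>0$, so the positive-part operator $\Im^{+}$ is inactive and the claimed formula follows.

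The main obstacle I expect is the bookkeeping in the second sum: getting the partial alternating binomial identity and the upper-negation steps right, with correct signs and indices, for non-integer $\alpha$. A secondary point is fixing the sign convention of the exponential in $S(\omega)$. Proposition~\ref{prop:ED} writes $e^{+ik\omega T}$, which would flip the sign of $S_1$. I will state explicitly that the derivation uses $z=e^{i\omega T}$ as in Eqn.~\eqref{eq:SLS}.
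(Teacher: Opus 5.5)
Your proposal is correct and follows essentially the same route as the paper's low-frequency series-expansion argument. You expand $S(\omega)$ to first order about $\omega=0$, identify the two finite sums as $\sum_k c_k=\Delta_s$ and $-\sum_k k\,c_k=\Delta_d$ (via the partial alternating binomial identity and upper negation), and linearize $K_1B_1S/(K_1T^\alpha+B_1S)$; its imaginary part divided by $\omega$ then gives the stated limit. Your choice $S(\omega)=\sum_k c_k e^{-ik\omega T}$, consistent with Eqn.~\eqref{eq:SLS} and Remark~5, is the right one, since the $e^{+ik\omega T}$ form printed in Proposition~\ref{prop:ED} would flip the sign of the result.
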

\noindent A proof sketch is provided in the SD~\cite{Supplementary}.

As in the effective stiffness case, Corollary~\ref{cor:ED_infmemory} focuses on the effective damping of the asymptotic case as $N \to \infty$, while Corollary~\ref{cor:ED_lowfreq} captures the finite memory case around low frequencies, as the series expansion is very complex for the most general case. On the other hand, for many applications, it may be sufficient to evaluate the effective damping around $\omega = 0$ to characterize the perceived stiffness behavior. As expected, the asymptotic case is recovered as $N$ is increased. 


The effective damping plots investigating the influence of different memory lengths $N$ and fractional orders $\alpha$ of the transfer function $H(z)$ in Eqn.~\eqref{eq:SLS} are provided in the SD~\cite{Supplementary}.



\begin{remark}[7] The proposed FO-SLS effective impedance formulations provide a unified formulation that reduces to classical integer-order~(IO) and fractional-order~(FO)  Kelvin-Voigt, Maxwell, and SLS models under appropriate parameter selections and limits. Table~\ref{tab:ES_special_cases}  summarizes the corresponding reductions and the resulting effective stiffness and effective damping expressions.
\end{remark}

\vspace{-3mm}
\subsection{Interpretation of Effective Stiffness and Damping}

The expressions for the effective stiffness \(ES(\omega)\) and effective damping \(ED(\omega)\) admit a mechanical interpretation in terms of an equivalent frequency-dependent impedance network. Define the fractional-order viscoelastic element as
\begin{equation*}
B_{\mathrm{fo}}(\omega)
\triangleq
\frac{B_1}{T^{\alpha}}\,
\big(1-e^{-i \omega T}\big)^{\alpha}.
\end{equation*}

The term $(1-e^{-i \omega T})^{\alpha}/T^{\alpha}$ arises from the discrete-time realization of the fractional derivative $s^{\alpha}$ according to the Grünwald--Letnikov definition, where the corresponding binomial theorem is employed, and the memory length~$N$ is taken to infinity.
Moreover, in the low-frequency limit, the resulting frequency response converges to that of the continuous-time operator $s^{\alpha}$, while it remains bounded at high frequencies due to discretization, which is essential for discrete-time performance and passivity analysis. In particular,
\begin{equation*}
\label{eq:GL_lowfreq_limit}
\lim_{\omega \to 0}
\frac{\big(1-e^{-i \omega T}\big)^{\alpha}}{T^{\alpha}}
=
(i \omega)^{\alpha},
\end{equation*}
\noindent which shows that the discrete-time operator approaches the continuous-time fractional derivative behavior as $\omega  \, T \to 0$.

Considering the $B_{\mathrm{fo}}$ definition, the terms appearing in both $ES(\omega)$ and $ED(\omega)$ correspond to the equivalent impedance of a \emph{series connection} between a spring with stiffness $K_1$ and a frequency-dependent complex term $B_{\mathrm{fo}}(\omega)$, placed in parallel with the static stiffness $K_0$. The real part of this series combination contributes to elastic energy storage and is captured by $ES(\omega)$, while the imaginary part, scaled by $1/\omega$, quantifies energy dissipation and is captured by $ED(\omega)$.

The magnitude and phase of $B_{\mathrm{fo}}(\omega)$ are jointly governed by the sampling time $T$ and the fractional order $\alpha$. 
In particular, decreasing $T$ increases the high-frequency magnitude of $B_{\mathrm{fo}}(\omega)$ proportionally to $T^{-\alpha}$, making the series branch effectively stiffer near the Nyquist frequency.  Meanwhile, the fractional order $\alpha$ controls both the magnitude growth, proportional to $\sin^{\alpha}(\omega T/2)$, and the phase rotation of $B_{\mathrm{fo}}(\omega)$, thereby shaping the trade-off between energy storage and dissipation across frequencies.

\section{Validations of Coupled Stability}
\label{sec:passivity_experiment}

In this section, we provide experimental evidence to help validate the theoretical passivity bounds. 
\subsection{Identification of Plant Parameters}

The haptic interface consists of a Bei Kimco LA28-15-002Z linear actuator equipped with a 2000 counts/inch U.S.~Digital linear optical encoder and a 3D printed supporting structure, as shown in Fig.~\ref{fig:Haptic Device}. 
 Linear guides are used to support non-collinear forces and minimize friction during interaction with the handle.

The direct-drive impedance-type plant dynamics can be faithfully modeled as a first-order system consisting of an equivalent mass and viscous damping, with an impedance transfer function of $Z_{plant}(s) = {m \, s + b}.$

The plant parameters were determined by conducting system identification experiments, during which a bidirectional linear chirp force signal, sweeping from 1~Hz to 10~Hz over 15~s, was commanded in real-time at 1~kHz using the Matlab Real-Time environment, while the resulting motions were recorded. A first-order transfer function was fitted to the data to determine the plant parameters, yielding $m = 73.4$~g and $b = 0.0025$~N/(mm·s), with $R^2 = 0.90$. 

\begin{figure*}[t]
    \centering
    \includegraphics[width=0.35\textwidth]{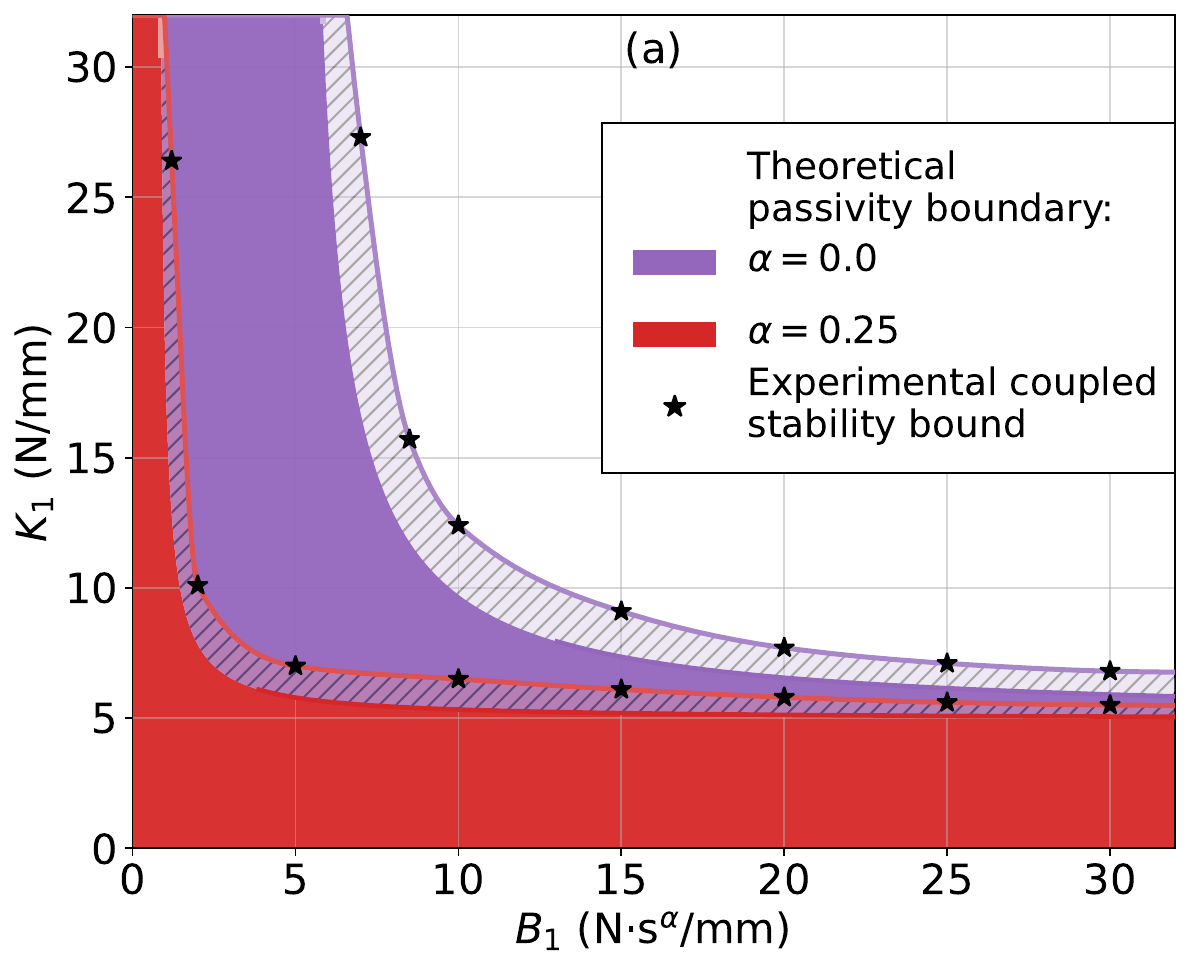} 
     \hspace{15mm}
    \includegraphics[width=0.35\textwidth]{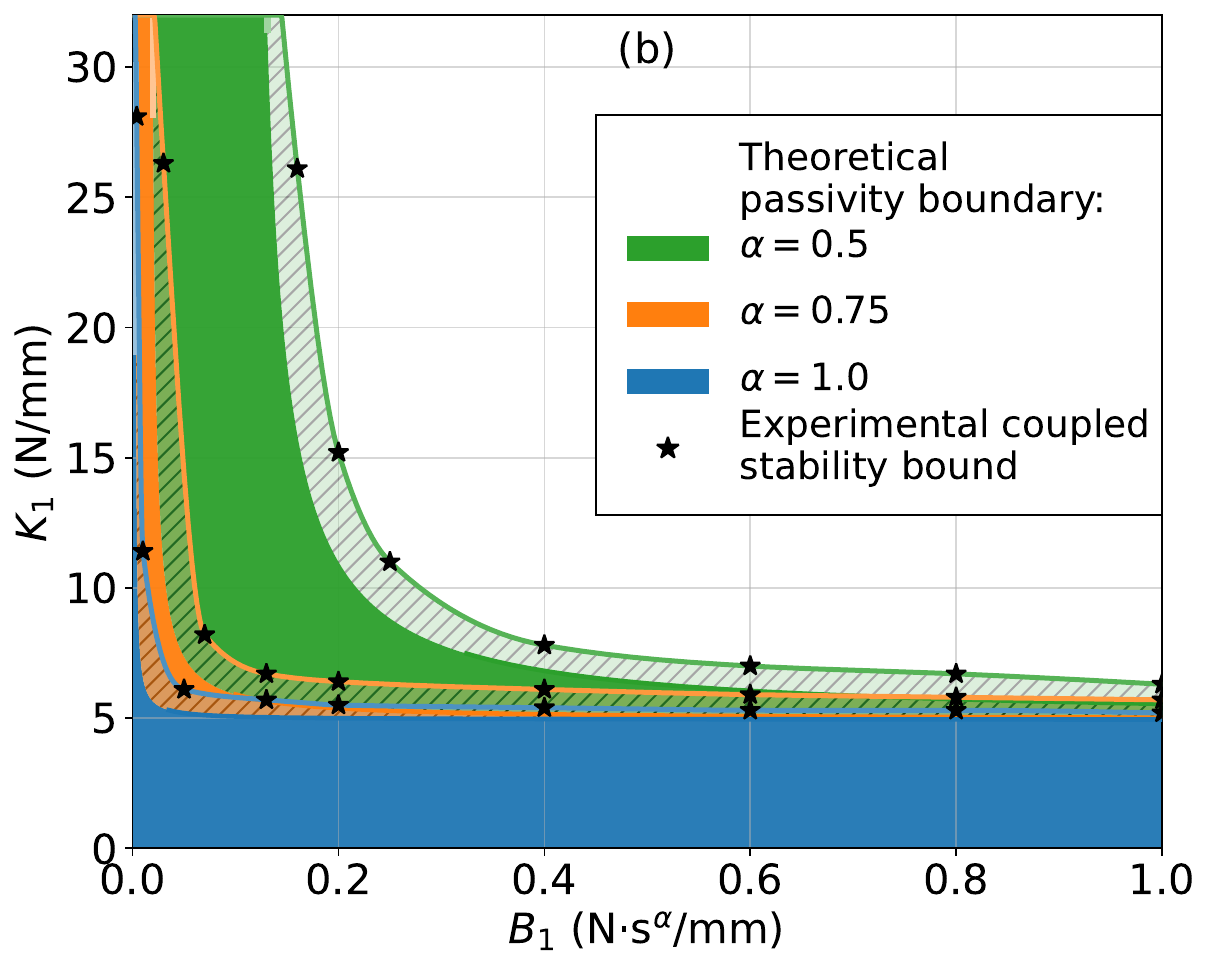} 
    \vspace{-.5\baselineskip}
    \caption{The solid colors present analytical passivity regions of the FO-SLS model in the $(B_1, K_1)$ plane for different orders $\alpha$, where $K_0 = 0$ to isolate the effect of the fractional viscoelastic component. The left figure presents $\alpha \in \{0, 0.25\}$, while the right figure shows $\alpha \in \{0.5, 0.75, 1\}$, since the differences can be better demonstrated using different $B_1$ ranges. The experimentally determined parameters are marked with $\ast$ and shown alongside their corresponding curve-fitted boundaries. The shaded regions highlight the mismatch between the analytical passivity and the experimental coupled stability bounds.}
    \label{fig:passivity_alpha_regions} \vspace{-.7\baselineskip}
\end{figure*}

\subsection{Experimental Verifications of Passivity Bounds}

The interaction controller was based on open-loop impedance control, implemented in real time at 1~kHz, with the reference impedance set to the FO-SLS model with $N=101$.

To experimentally determine generalized (with respect to~$\alpha$) $K_1-B_1$ plots for the FO-SLS model, a line search was conducted along the $K_1$-axis, starting from 25\% below the theoretical passivity bound and increasing the $K_1$ parameter with a resolution of $0.1~\mathrm{N/mm}$. $K_0$ was kept constant at $K_0 = 0$ to isolate the effect of the fractional viscoelastic component on passivity. Coupled stability of interactions was evaluated by exciting the system with an impact to excite all possible frequencies and observing the response. For each set of trial parameters, if no violation of coupled stability was observed across five trials, we concluded that the experimental evidence indicated the system was passive for that parameter set. Otherwise, if any violations of coupled stability (e.g., chatter) were observed, then the parameter set was considered active. A video of the coupled stability experiments is provided in the Multimedia Extension. 

The theoretical passivity regions in the $(B_1, K_1)$ plane for various fractional orders $\alpha$ are presented in Fig.~\ref{fig:passivity_alpha_regions}. The plots include not only the analytical boundaries of the passivity regions but also the experimentally determined parameters and the passivity-bound estimates based on these parameters. 

The experimentally determined parameters agree closely with the analytical predictions, with an NRMSE of $1.61\%$, where errors are computed based on the shortest distance to the theoretical boundary. 
 The experimental results based on the coupled stability of interactions slightly overestimate the passivity bounds, extending the passivity region by~$\approx 15\%$. 

These results indicate a reasonable agreement between the analytical passivity conditions and the experimentally observed coupled stability limits. The experimental regions tend to be slightly less conservative, generally expanding the passivity region compared to the analytical prediction. This discrepancy can be attributed to unmodeled physical effects such as friction and hysteresis, which introduce additional energy dissipation and are not captured by the underlying LTI system model.

\begin{figure*}[t]
    \centering \vspace{-.3\baselineskip}
    \includegraphics[width=0.8\textwidth]{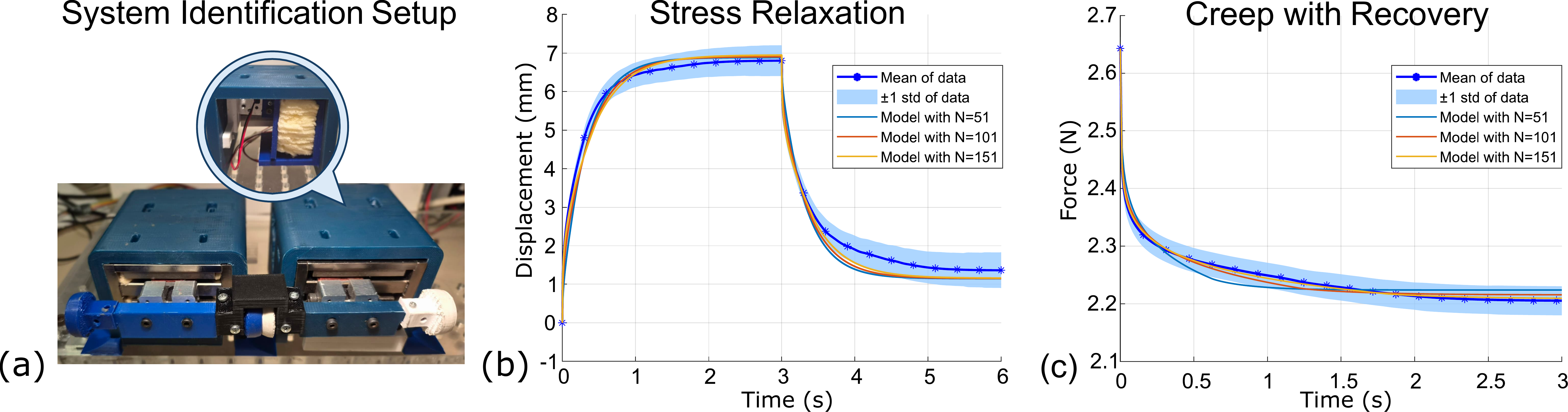}
    \caption{(a) System identification setup used to determine reference viscoelastic material. Measured responses for (b) stress relaxation under constant deformation and (c) creep--recovery under step loading. The corresponding fits of the fractional-order standard linear solid (SLS) model for different memory lengths are also included.}
    \label{fig:systemID} \vspace{-2mm}
\end{figure*}

\vspace{-2mm}
\section{Validations of Haptic Rendering Fidelity}
\label{sec:comparison_experiment}

In this section, we evaluate the rendering fidelity of the FO-SLS model through a human-subject study comparing virtual viscoelastic renderings with a physical viscoelastic material. We test memory lengths of $N \in \{51, 101, 151\}$ to investigate how the discretization window affects perceived realism.


\vspace{-3mm}
\subsection{Determination of Parameters of the FO-SLS Model}
\label{sec:population_optima}


For these human-subject studies, we first determine the parameters of the viscoelastic material using the four-parameter FO-SLS model in Eqn.~\eqref{eq:SLS}. 


The reference material used for all perceptual tests was a viscoelastic memory foam sample. We performed system identification of this sample using stress-relaxation and creep-with-recovery experiments. During these experiments, the material was characterized using the system identification setup shown in Fig.~\ref{fig:systemID}(a), in which the haptic interfaces were mechanically coupled to apply controlled excitations and record the resulting material response. Parameter estimation involved searching the feasible region of the parameter space that satisfies the analytically derived passivity criteria.

The system identification process involved applying a constant force of 3~N to the sample and observing its deformation behavior for 3~s. Thereafter, the force level was instantaneously reduced to 0.5 N, and the deformation behavior during recovery was measured. Stress relaxation experiments involved applying a constant deformation of 5~mm to the sample and measuring its response in terms of the force output over 3~s. All experiments were repeated 16~times. Figs.~\ref{fig:systemID}(b) and~\ref{fig:systemID}(c) show the average responses together with their standard deviations and the regression results.

The choice of the memory window length $N$ is of primary importance for the correct reproduction of the creep and stress relaxation behavior in the discrete-time FO-SLS model. For the model fitting, window lengths of $N \in \{51, 101, 151\}$ were chosen. The results are provided in Fig.~\ref{fig:systemID}. The fit quality is evaluated using the normalized RMSE~(NRMSE).

As shown in Figs.~\ref{fig:systemID}(b) and~\ref{fig:systemID}(c), increasing the window length improves model accuracy. For $N = 101$, an acceptable level of accuracy is achieved according to the NRMSE. Namely, the NRMSE equals to $0.324\%$ for the stress relaxation test. Hence, the window length $N = 101$ was selected as the nominal memory length because it provides high accuracy without requiring excessive computational effort.

Table~\ref{tab:hil_sysid_all_models} presents the identified model parameters for the FO-SLS models with  $N \in \{51, 101, 151\}$  and the accuracy of these fits. The identified model parameters have $K_0 < 0$, since this element compensates for the additional stiffness introduced by the fractional-order element. It is important to note that the identified parameters comply with the passivity condition, indicating a physically consistent and realistic representation of the viscoelastic material. Another important observation is that models with different memory lengths~$N$ exhibit approximately the same effective stiffness at low frequencies when fitted to the experimental data, suggesting that increasing~$N$ primarily improves dynamic fidelity without significantly altering the overall perceived stiffness.

\begin{table}[t!]
\centering \vspace{1mm}
\caption{System identification results for the FO-SLS models with $N \in \{51, 101, 151\}$.}
\vspace{-2mm}
\label{tab:hil_sysid_all_models}
\renewcommand{\arraystretch}{1.25}
\setlength{\tabcolsep}{6pt}
\footnotesize
\resizebox{.49\textwidth}{!}{$
\begin{tabular}{l c c c c c}
\hline
\textbf{Memory Lengths} & \(K_0\) & \(K_1\) & \(B_1\) & \(\alpha\) & NRMSE \\
\hline
FO-SLS N=51
& \(-3.14\) 
& \(5.60\) 
& \(5.83\) 
& \(0.246\) 
& \(0.604\%\) \\

FO-SLS N=101
& \(-2.89\) 
& \(5.70\) 
& \(5.89\) 
& \(0.203\) 
& \(0.314\%\) \\

FO-SLS N=151 
& \(-2.77\) 
& \(5.74\) 
& \(5.92\) 
& \(0.185\) 
& \(0.130\%\) \\

\hline
\end{tabular}
$}
\normalsize
\end{table}


\vspace{-3mm}
\subsection{Task}

The task was to compare the perceptual similarity between a reference (physical) viscoelastic material and the three virtual environments rendered by the haptic interface. In each trial of the cross-comparison experiment, participants interacted with the reference haptic device containing the physical viscoelastic material and with two virtual environments drawn from the set of three viscoelastic models. Their task was to evaluate the realism of each virtual environment and indicate which one felt most similar to the reference. Overall, participants were instructed to focus on the overall haptic impression, including stiffness, damping, and viscoelastic response, and to base their decisions solely on perceived similarity, without the aid of visual or auditory cues. No feedback on correctness was given; the task was purely perceptual and subjective.

\vspace{-3mm}
\subsection{Participants}

A total of 20 participants (mean age: 22.1 years) were recruited for the study. All participants were right-handed and reported no known sensory–motor impairments. Before participation, written informed consent was obtained from all volunteers in accordance with the ethical guidelines approved by the Institutional Review Board of Sabanci University (Protocol No: FENS-2025-17). None of the participants had extensive prior experience with haptic interfaces or psychophysical experiments.

\vspace{-3mm}
\subsection{Apparatus}
\label{sec:apparatus}

The experimental setup consisted of two identical haptic interfaces and a graphical user interface, as shown in Fig.~\ref{fig:Haptic Device}. One interface contained the physical viscoelastic material and served as the reference, while the other was controlled to render the three viscoelastic virtual environments using the system-identified parameter set with memory lengths $N \in \{51, 101, 151\}$. Participants held the devices with their dominant hands, palms facing inwards, and touched them with their index fingers in a palpation motion similar to that used in clinical diagnostics. 

\begin{figure}[t!]
    \centering
    \includegraphics[width=.7\linewidth]{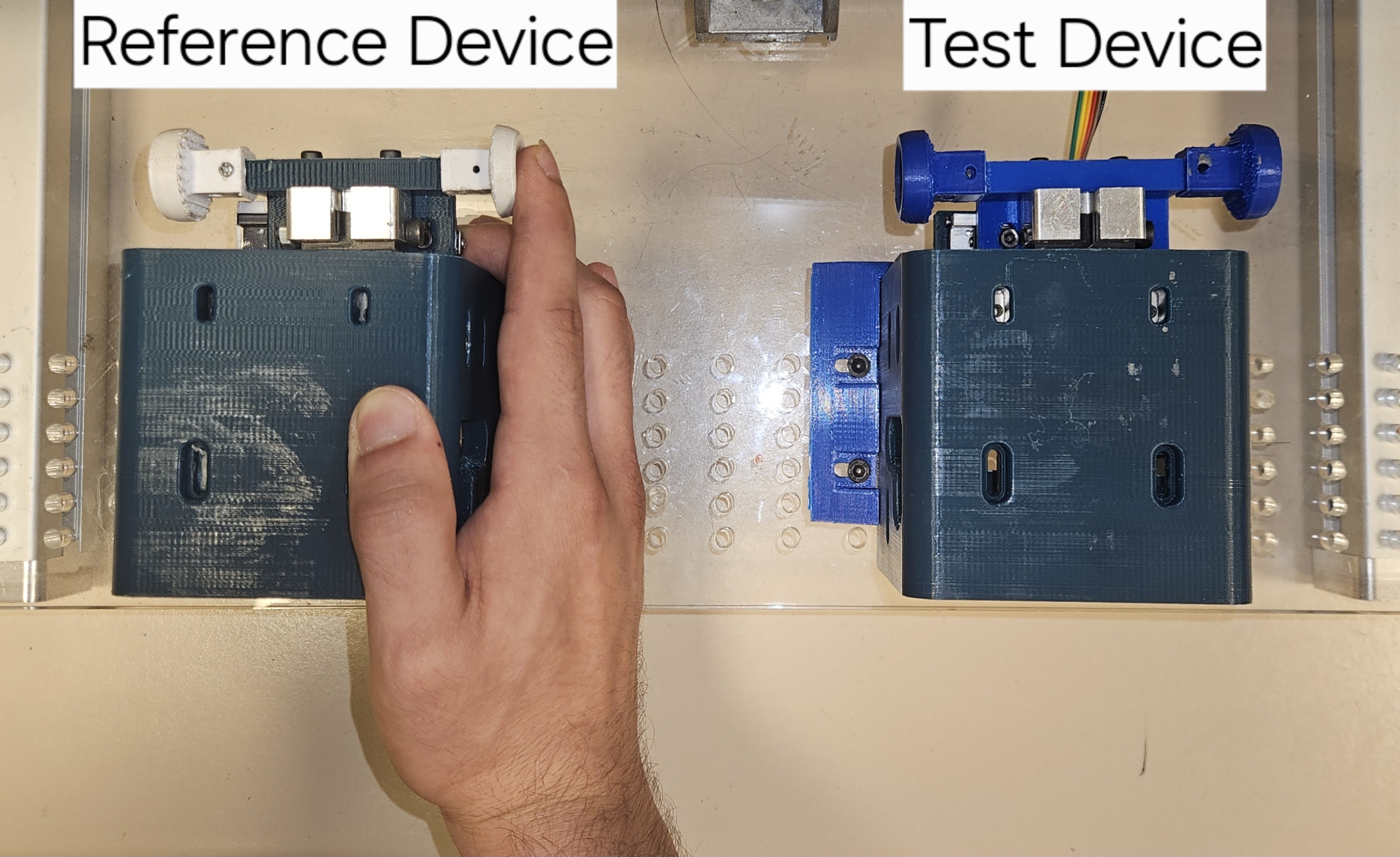}
    \vspace{-1mm}
    \caption{Two identical haptic interfaces.}
    \label{fig:Haptic Device} 
\end{figure}

\subsection{Experimental Protocol}

\subsubsection{Warm-Up}

Each participant performed a warm-up session preceding the actual comparison trials. In this warm-up, the participants were familiarized with the haptic devices and the interaction task. During this phase, subjects were presented with a set of different haptic renderings and asked to explore both the reference and virtual environments until they felt comfortable doing the task. The warm-up session lasted about five minutes.

\smallskip
\subsubsection{Cross-Comparison}

After the warm-up, participants proceeded to the cross-comparison session, which constituted the main experiment. The three FO-SLS models with memory lengths $N \in \{51, 101, 151\}$ were paired in all possible combinations of two. For each pair, participants performed multiple comparison trials.

In each trial, the presentation sequence of the two virtual environments was randomized. The participant first interacted with the reference device containing the real viscoelastic material and then engaged with each of the two candidate virtual worlds in turn. After exploration, the participant responded to the question: \emph{“Which of the two haptic interfaces felt more similar to the reference device?”}. In addition, participants are asked to evaluate each model independently based on its perceived similarity to the reference haptic interface. The following qualitative categories were provided:
\begin{itemize} \vspace{.5mm}
    \item \textbf{Close:} The tested haptic interface feels the same as, or highly close to, the reference haptic interface. \vspace{1.5mm}
    \item \textbf{Similar:} The tested haptic interface feels moderately similar to the reference haptic interface.  \vspace{1.5mm}
    \item \textbf{Different:} The tested haptic interface feels significantly different from the reference haptic interface.
\end{itemize}  The response was recorded through the GUI. This was repeated for all model pairs, with 12 repetitions per pair, resulting in a total of 36 comparative trials per subject. The complete cross-comparison session lasted about 20 minutes, and the whole experiment, including warm-up, was completed in about 25 minutes.

\subsection{Results and Discussion}

To evaluate the perceived realism of different short-memory discretization window sizes, we analyzed the classification and pairwise preference data collected during the haptic rendering fidelity validation sessions. The results are summarized by the classification confusion matrix in Fig.~\ref{fig:class_Pref}(a), while the overall pairwise preference rates are presented in Fig.~\ref{fig:class_Pref}(b). The virtual environments with $N=101$ and $N=151$ were mostly classified as ``close'' to the physical reference, whereas $N=51$ was mainly classified as ``similar''. Consistently, both $N=101$ and $N=151$ were dominantly preferred over $N=51$, while the direct comparison between $N=101$ and $N=151$ yielded similar preference rates. To further assess these trends, ordinal classifications and pairwise preferences were statistically analyzed separately.

\begin{figure}[t!]
    \centering
    \includegraphics[width=0.4\textwidth]{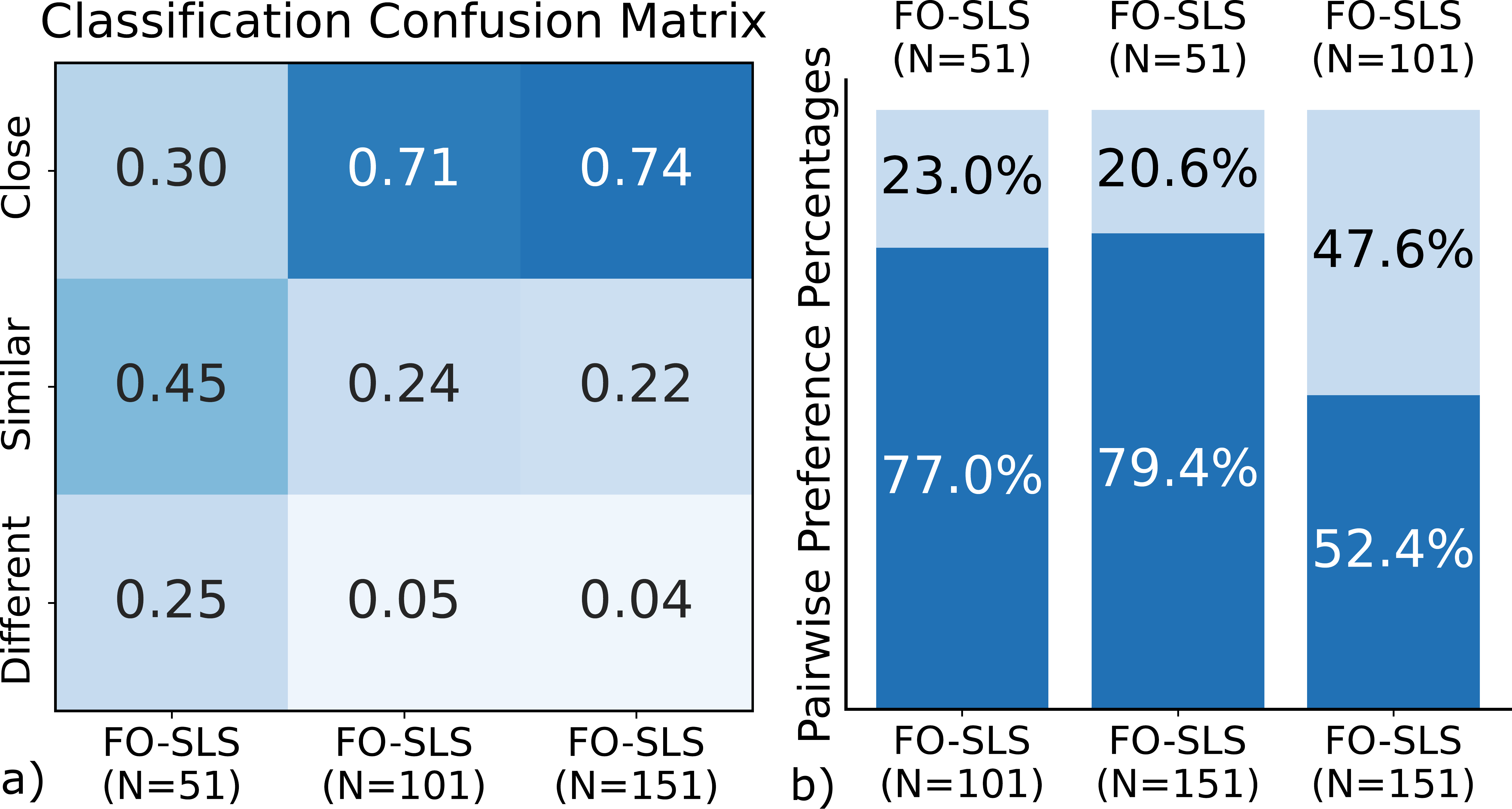} \vspace{-1mm}
    \caption{(a)~Normalized confusion matrix for FO-SLS models with different memory lengths. (b)~Pairwise preference percentages for FO-SLS models with different memory lengths.}
    \label{fig:class_Pref}
\end{figure}

The classification analysis tested whether each window size produced a realistic virtual environment by assessing the likelihood of being classified as ``close'' to the physical reference. Ordinal responses were recoded into a binary outcome, with ``close'' coded as 1 and ``similar'' or ``different'' coded as 0. Each rendering was analyzed independently using an intercept-only logistic regression model, which tested whether its ``close'' classification rate differed from the 50\% reference level without comparing renderings directly. The results are summarized in Table~\ref{tab:window_classification_stats}.

\begin{table}[t]
\centering
\caption{Close-classification analysis for different discretization window sizes.}
\label{tab:window_classification_stats} \vspace{-1mm}
\begin{tabular}{lcccc}
\hline
Rendering & Estimate & SE & $t$-value & $p$-value \\
\hline
FO-SLS ($N=51$)  & $-0.849$ & 0.125 & $-6.818$ & $<0.001$ \\
FO-SLS ($N=101$) & 0.916    & 0.172 & 5.323    & $<0.001$ \\
FO-SLS ($N=151$) & 1.036    & 0.130 & 7.983    & $<0.001$ \\
\hline
\end{tabular}\vspace{-1mm}
\end{table}

The preference analysis tested whether participants showed a systematic preference between renderings with different discretization window sizes. Preferences for each pairwise combination were analyzed independently using an intercept-only logistic regression model. The dependent variable indicated whether the rendering with the larger discretization window was preferred. Therefore, positive estimates indicate preference for the rendering with the larger window size, and the intercept tests whether this preference rate differed from 50\%. The results are summarized in Table~\ref{tab:window_preference_stats}. All statistical models used to analyze the classifications and preferences employed cluster-robust standard errors to account for repeated responses within participants.

\begin{table}[t]
\centering
\caption{Pairwise preference analysis for different discretization window sizes.} \vspace{-1mm}
\label{tab:window_preference_stats}
\begin{tabular}{lcccc}
\hline
Comparison & Estimate & SE & $t$-value & $p$-value \\
\hline
$N=51$ vs. $N=101$  & 1.207 & 0.155 & 7.773 & $<0.001$ \\
$N=51$ vs. $N=151$  & 1.347 & 0.204 & 6.616 & $<0.001$ \\
$N=101$ vs. $N=151$ & 0.095 & 0.142 & 0.669 & 0.503 \\
\hline
\end{tabular}
\end{table}

The classification results showed a clear distinction between the smallest window size and the two larger window sizes. The rendering with $N=51$ was classified as close significantly less often than the 50\% reference level ($\beta=-0.849$, $p<0.001$), whereas both $N=101$ and $N=151$ were classified as close significantly more often than this reference level ($\beta=0.916$ and $\beta=1.036$, respectively; both $p<0.001$). Thus, the two larger window sizes were consistently judged to be close to the physical reference, whereas the smaller window size was not. The preference results followed the same pattern: both $N=101$ and $N=151$ were significantly preferred over $N=51$ ($\beta=1.207$ and $\beta=1.347$, respectively; both $p<0.001$). In contrast, the direct comparison between $N=101$ and $N=151$ did not show a significant preference advantage for the larger window size ($\beta=0.095$, $p=0.503$).

These results suggest that increasing the discretization window from $N=51$ to $N=101$ led to a clear improvement in perceived realism, whereas increasing the window further from $N=101$ to $N=151$ did not provide evidence of an additional perceptual benefit, as both of the discretization window sizes yield comparably realistic rendering. The classification estimates for $N=101$ and $N=151$ were numerically similar, and the direct pairwise preferences did not show a significant preference advantage for $N=151$ over $N=101$. Both window sizes achieved high perceived realism, while providing no statistical evidence that selecting window size as $N=151$ produced perceptually more realistic renderings than selecting the window size as $N=101$. 


\vspace{-2mm}

\section{Conclusions}
In this paper, we investigated the use of the viscoelastic models with FO terms for haptic rendering. The problem was approached from two complementary perspectives: theoretical analysis and perceptual evaluation.

First, we presented rigorous passivity analysis to ensure the coupled stability of the FO-SLS model under finite-memory discretization. We derived explicit closed-form passivity conditions and showed that the result provides a unified formulation from which the passivity results for the commonly utilized viscoelastic models can be readily derived as special cases. We have also experimentally verified the established bounds. In this respect, the proposed passivity condition generalizes existing results and promotes the safe, systematic use of fractional-order models, which are less commonly used, possibly due to coupled stability and finite-memory concerns.

We also obtained closed-form expressions for the effective stiffness and damping of the FO-SLS model under finite-memory discretization. These results enable a thorough analysis of the frequency-dependent behavior of viscoelastic rendering and permit examination of model performance at critical frequencies. We demonstrated that the resulting formulations represent generalized expressions that recover the classical equations of commonly used viscoelastic models under appropriate parameter substitutions. This unifying perspective provides additional insight into the relationship between fractional- and integer-order representations and underscores the added flexibility that fractional calculus offers in modeling.

Finally, we investigated whether the length of memory terms used during fractional-order modeling results in practically meaningful differences for human operators while interacting with viscoelastic virtual environments. In this regard, we conducted a comparative study between a physical reference viscoelastic model and the three FO-SLS models with different memory lengths. The experimental results revealed that the fractional-order models with sufficiently large memory were statistically distinct from the models with short memory, but this effect saturates after a memory limit. Our results support the suggestion that sufficiently large memory lengths need to be considered while rendering fractional-order viscoelastic models to provide a realistic haptic experience for the users.

Overall, these results are consistent with a saturation-like trend, in which the perceptual benefit of increasing the memory length becomes limited beyond approximately $N=101$ under the present experimental conditions. At the same time, the high close-classification rates observed for both $N=101$ and $N=151$ indicate that realistic FO-SLS rendering can be achieved with sufficiently large discretization windows. From a practical rendering perspective, this supports using $N=101$ as a suitable discretization window, as it maintains high perceived realism while avoiding the additional computational cost of a longer memory window.


\vspace{-2mm}
\section*{Acknowledgment}

This work has been partially supported by the TUBITAK Grant~23AG003. 

\vspace{-1.5mm}
\bibliographystyle{IEEEtran}
\bibliography{Bibliography}

\vspace{-1\baselineskip}
\begin{IEEEbiography}[{\includegraphics[width=.8in,height=1.2in,clip,keepaspectratio]{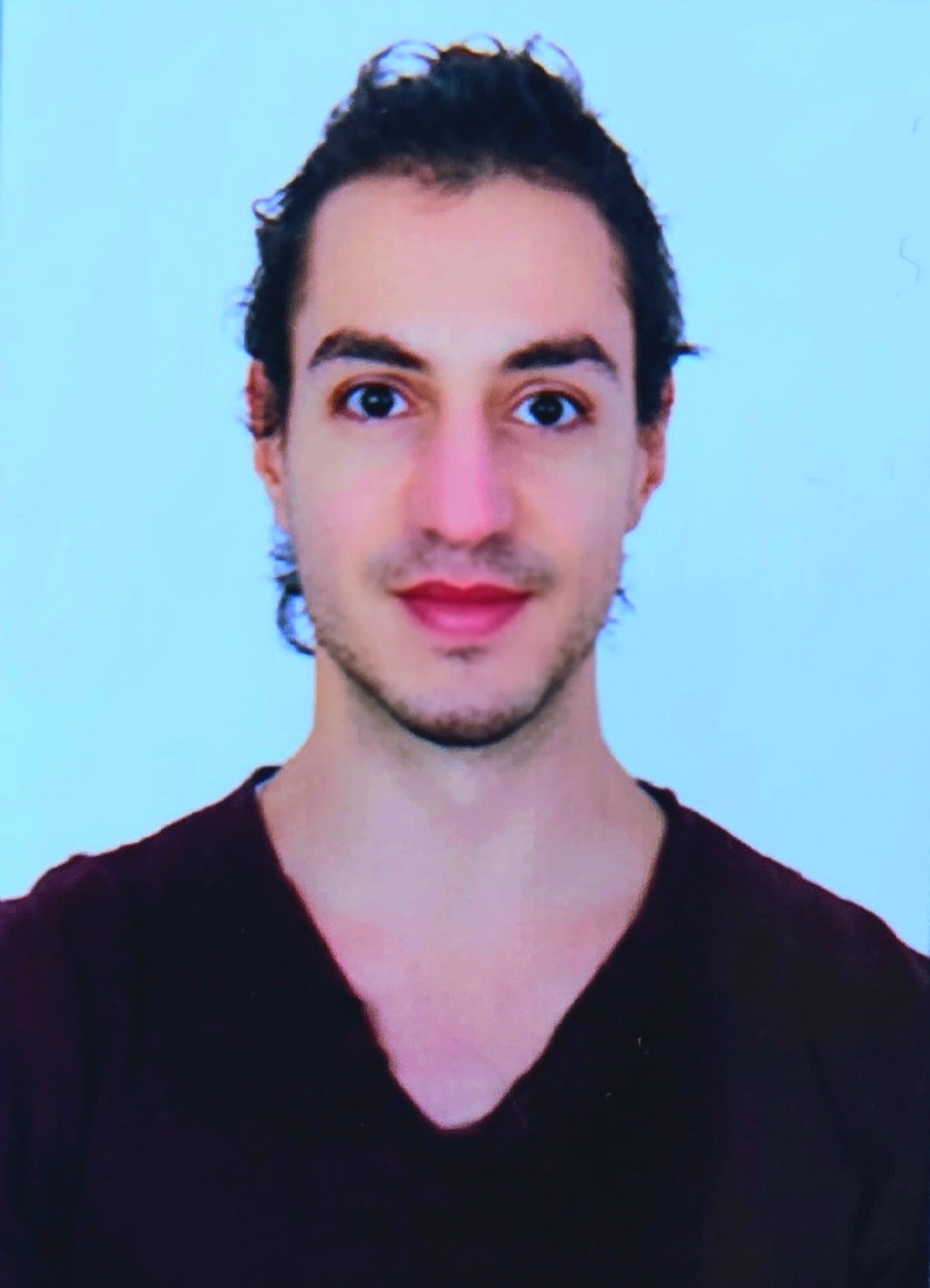}}]
{Gorkem Gemalmaz}  received his B.Sc. degree in mechanical engineering from Middle East Technical University~(2022). Currently, he is pursuing his Ph.D. degree at Sabanci University.
His research interests include physical human-robot interaction and interaction control.
\end{IEEEbiography}

\vspace{-1\baselineskip}
\begin{IEEEbiography}[{\includegraphics[width=.8in,height=1.2in,clip,keepaspectratio]{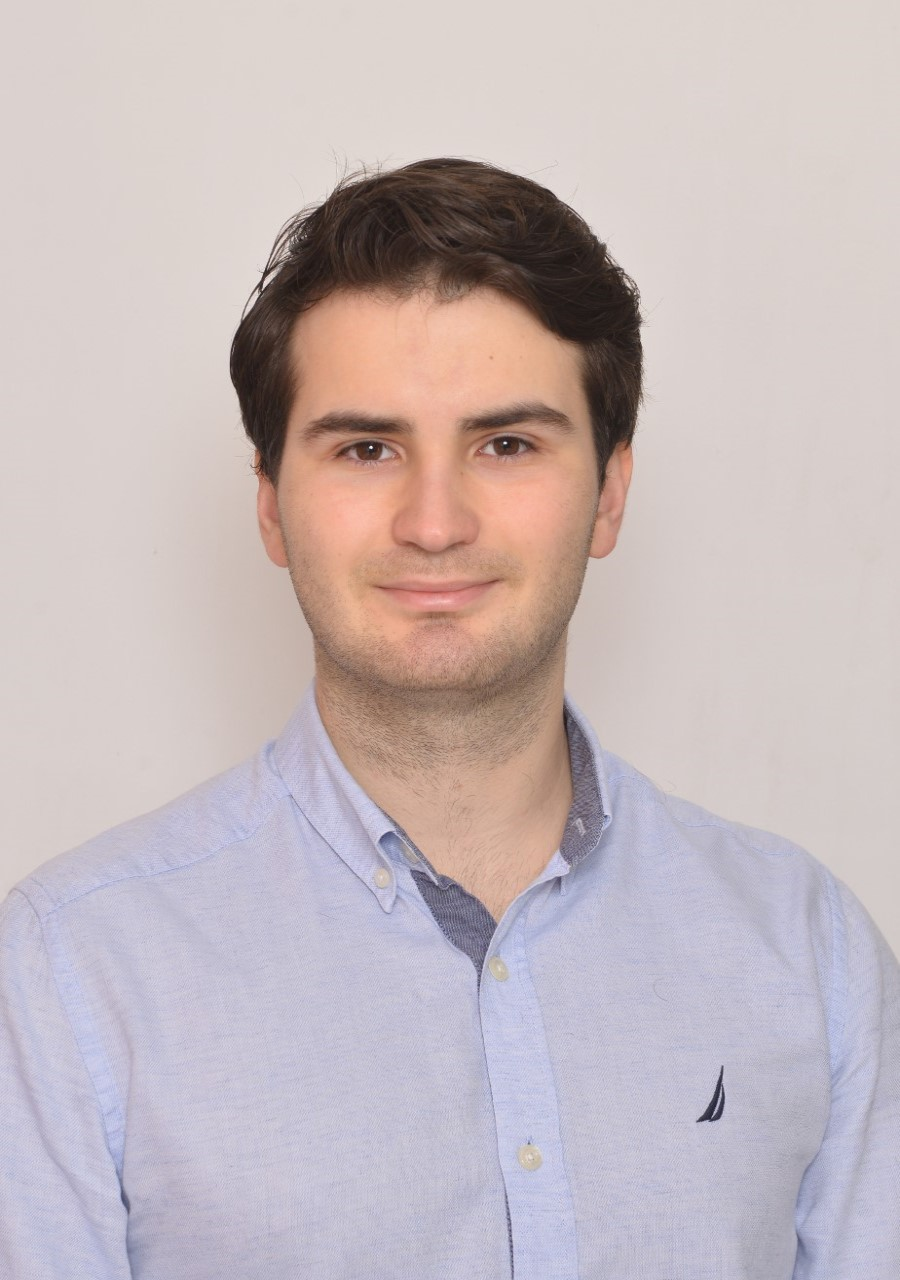}}]
{Harun Tolasa}  received his B.Sc. degree in mechanical engineering from Bilkent University (2021) and his M.Sc. in mechatronics engineering from Sabanci University~(2024). Currently, he is pursuing his Ph.D. degree at Sabanci University. His research interests include active learning, human-in-the-loop optimization, and haptic rendering.
\end{IEEEbiography}

\vspace{-1\baselineskip}
\begin{IEEEbiography}[{\includegraphics[width=1in,height=1.2in,clip,keepaspectratio]{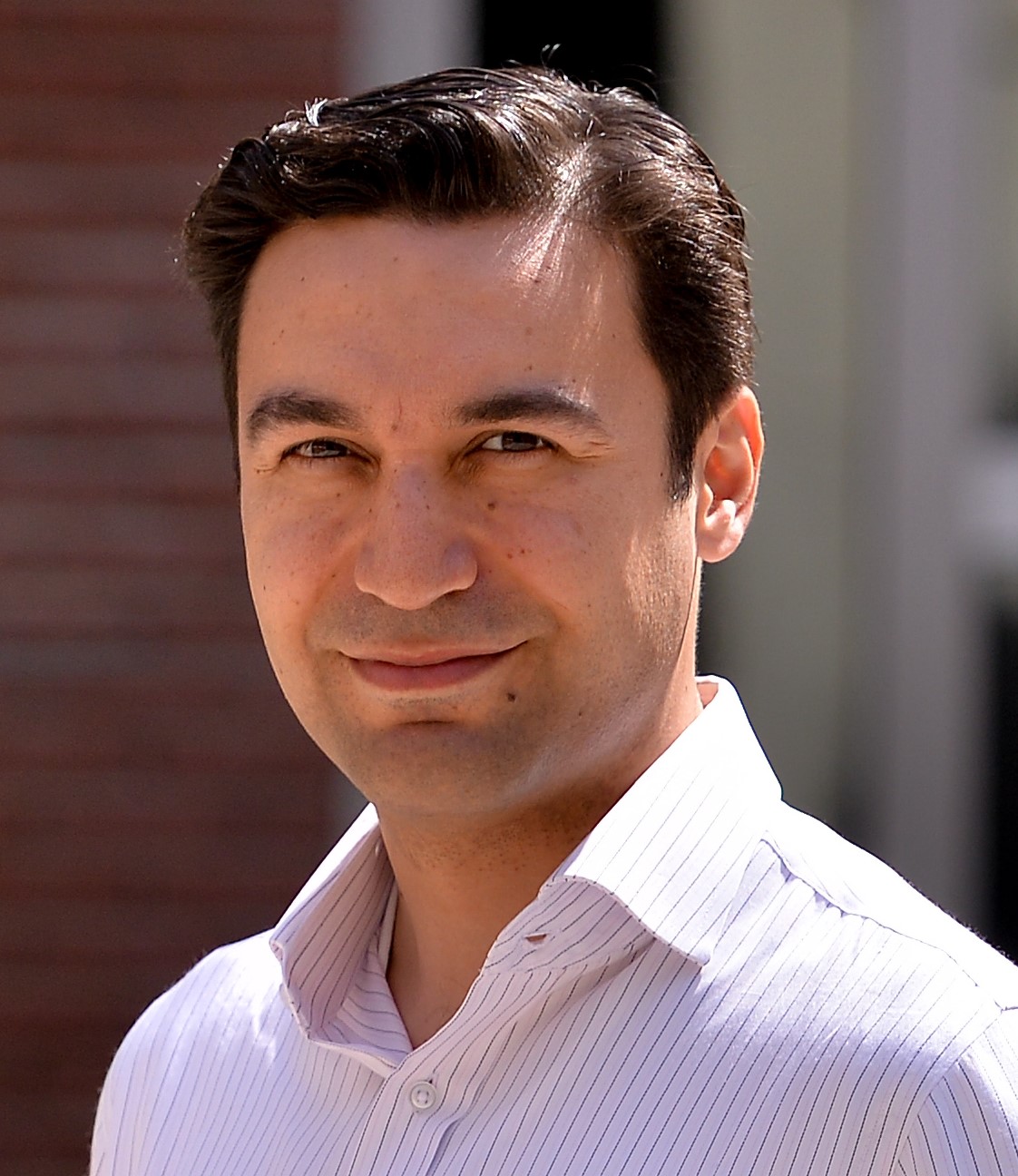}}]
{Volkan Patoglu} is a professor in mechatronics engineering at Sabanci University.
He received his Ph.D. in mechanical engineering from the University of Michigan, Ann Arbor~(2005) and worked as a postdoctoral researcher at Rice University~(2006). His research is in the area of physical human-machine interaction, in particular, the design and control of force-feedback robotic systems with applications in rehabilitation. His research extends to cognitive robotics. He has served as associate editor for IEEE Transactions on Haptics~(2013--2017), IEEE Transactions on Neural Systems and Rehabilitation Engineering~(2018--2023), and IEEE Robotics and Automation Letters~(2019--2024).
\end{IEEEbiography}

\vfill

\end{document}